\documentclass{article}

\usepackage{iclr2026_conference,times}
\usepackage{threeparttable}
\usepackage{siunitx}
\usepackage{adjustbox}
\usepackage{microtype}
\usepackage{amsmath,amssymb,amsfonts}
\usepackage{amsthm}
\usepackage{aliascnt}
\usepackage{booktabs}
\usepackage{graphicx}
\usepackage{caption}
\usepackage{float}
\usepackage{algorithm}
\usepackage{algpseudocode}
\usepackage{multirow}
\usepackage{natbib}
\usepackage{xcolor}
\usepackage{url}
\usepackage{hyperref}
\usepackage{enumitem}
\usepackage{longtable}
\usepackage{tikz}
\usetikzlibrary{arrows.meta,positioning,calc,fit}
\usepackage{pgfplots}
\usepgfplotslibrary{groupplots}
\pgfplotsset{compat=1.18}

\hypersetup{
  colorlinks=true,
  citecolor=green!55!black,
  linkcolor=green!55!black,
  urlcolor=green!55!black
}

\newcommand{\method}{\textsc{PDC}}
\newcommand{\init}{\textsc{Init}}
\newcommand{\tif}{\textsc{TiF}}

\newcommand{\masktoken}{\text{\texttt{[MASK]}}}
\newcommand{\token}{\text{\texttt{[TOKEN]}}}
\newcommand{\posgain}[1]{\textcolor{green!45!black}{#1}}
\newcommand{\neggain}[1]{\textcolor{red!55!black}{#1}}

\newcommand{\Acc}{\operatorname{Acc}}
\newcommand{\Maj}{\operatorname{Maj}}
\theoremstyle{plain}
\newtheorem{theorem}{Theorem}[section]
\newaliascnt{proposition}{theorem}
\newtheorem{proposition}[proposition]{Proposition}
\aliascntresetthe{proposition}
\newaliascnt{corollary}{theorem}
\newtheorem{corollary}[corollary]{Corollary}
\aliascntresetthe{corollary}
\newaliascnt{lemma}{theorem}
\newtheorem{lemma}[lemma]{Lemma}
\aliascntresetthe{lemma}

\theoremstyle{definition}
\newaliascnt{definition}{theorem}

\aliascntresetthe{definition}
\newaliascnt{assumption}{theorem}
\newtheorem{assumption}[assumption]{Assumption}
\aliascntresetthe{assumption}

\newaliascnt{observation}{theorem}
\newtheorem{observation}[observation]{Observation}
\aliascntresetthe{observation}

\theoremstyle{remark}
\newaliascnt{remark}{theorem}

\aliascntresetthe{remark}
\usepackage{listings}
\DeclareCaptionStyle{ruled}{labelfont=normalfont,labelsep=colon,strut=off}
\floatstyle{ruled}
\newfloat{listing}{tb}{lst}{}
\floatname{listing}{Algorithm}

\usepackage{booktabs}
\usepackage{multirow}
\usepackage{makecell}
\usepackage{tcolorbox}
\tcbuselibrary{breakable}
\usepackage[framemethod=tikz]{mdframed}

\newenvironment{mainobservation}[1][]{%
  \begin{tcolorbox}[
    colback = green!5,
    colframe = white,
    fonttitle = \bfseries,
    breakable = true]
  \begin{observation}[#1]%
}{%
  \end{observation}%
  \end{tcolorbox}
  \vspace{5pt}
}

\newenvironment{maintheorem}[1][]{%
  \begin{tcolorbox}[
    colback = red!5,
    colframe = white,
    fonttitle = \bfseries,
    breakable = true]
  \begin{theorem}[#1]%
}{%
  \end{theorem}%
  \end{tcolorbox}
  \vspace{5pt}
}

\title{Prefix-Denoising Consistency: Test-Time Verification for Diffusion Language Models}

\author{%
  Yuki Ichihara\textsuperscript{1} \quad
  Naoto Iwase\textsuperscript{2}  \quad
  Mohammad Atif Quamar\textsuperscript{1}\quad
  Junpei Komiyama\textsuperscript{1,3} \\[2pt]
  \textsuperscript{1}MBZUAI \quad
  \textsuperscript{2}Nagoya University \quad
  \textsuperscript{3}RIKEN AIP \\[2pt]
  \normalfont\small\texttt{\{yuki.ichihara, mohammad.atif\}@mbzuai.ac.ae} \\ \texttt{naoto@iwase.dev} \quad \texttt{junpei@komiyama.info} \\[2pt]
}
\date{}
\iclrfinalcopy
\begin{document}

\maketitle
\lhead{Preprint.}

\begin{abstract}
Diffusion Language Models (DLMs) have recently become increasingly competitive with autoregressive (AR) models, and even outperform them on certain tasks. Unlike AR models, DLMs produce output through iterative denoising without a left-to-right order. To further improve the performance of DLMs, we introduce \method{} (\emph{Prefix-Denoising Consistency}), a test-time self-verification method for DLMs. \method{} exploits a distinctive test-time signal in DLMs under prefix conditioned regeneration, correct trajectories are more stable and reproducible than incorrect ones. Concretely, given an initially generated sample, \method{} splits the sentence at an intermediate position and regenerates the remaining tokens conditioned on the fixed prefix. Across mathematical reasoning and commonsense reasoning benchmarks, \method{} consistently improves upon the initial sample, outperforms independent generations under a computational constrained comparison, and is robust to different unmasking strategies and parameter settings. These results highlight prefix-conditioned regeneration as an effective DLM-specific primitive for test-time verification.

\end{abstract}

\begin{figure}[h]
\centering
\resizebox{0.98\linewidth}{!}{%
\begin{tikzpicture}[
  font=\footnotesize,
  >=Latex,
  panel/.style={draw=blue!35!black, fill=blue!2, rounded corners=4pt,
    line width=0.55pt},
  methodtitle/.style={font=\bfseries\large, text=black!84, align=center},
  heading/.style={font=\bfseries, text=black!84, align=center},
  subnote/.style={font=\scriptsize, text=black!62, align=center},
  arrowlabel/.style={font=\scriptsize, text=black!58, fill=white,
    inner xsep=2pt, inner ysep=1pt, align=center},
  rate/.style={font=\scriptsize, text=black!68, align=right},
  box/.style={draw=black!30, fill=white, rounded corners=2pt, line width=0.45pt,
    align=center, inner xsep=5pt, inner ysep=4pt},
  answer/.style={box, fill=green!10, draw=green!55!black, minimum width=0.86cm,
    inner xsep=3pt, inner ysep=3pt},
vote/.style={box, fill=green!10, draw=green!55!black,
  text width=2.05cm,
  minimum height=1.12cm,
  inner xsep=5pt,
  inner ysep=5pt},
  arrow/.style={-{Latex[length=2.0mm,width=1.35mm]}, line width=0.62pt,
    draw=black!60, shorten >=2pt, shorten <=2pt},
  dashedarrow/.style={-{Latex[length=1.8mm,width=1.20mm]}, line width=0.48pt,
    draw=black!38, dashed, shorten >=2pt, shorten <=2pt},
  barframe/.style={draw=black!50, line width=0.34pt},
  keptseg/.style={fill=red!16, draw=black!34, line width=0.24pt},
  maskseg/.style={fill=black!8, draw=black!30, line width=0.24pt},
  regenseg/.style={fill=red!16, draw=black!32, line width=0.24pt},
  initseg/.style={fill=orange!18, draw=black!34, line width=0.24pt}
]
\draw[panel] (-1.55,-1.02) rectangle (13.05,3.12);
\node[methodtitle] at (5.75,2.52) {Prefix-Denoising Consistency};

\def\totw{3.20}
\def\barh{0.28}
\def\initialbar#1#2{%
  \begin{scope}[shift={(#1,#2)}]
    \draw[initseg] (0,0) rectangle (\totw,\barh);
    \draw[barframe] (0,0) rectangle (\totw,\barh);
    \node[font=\scriptsize,text=black!72] at (1.60,0.14) {$y$};
  \end{scope}%
}
\def\inputbar#1#2#3#4{%
  \begin{scope}[shift={(#1,#2)}]
    \node[rate, anchor=east] at (-0.28,0.14) {#4};
    \draw[keptseg] (0,0) rectangle (#3,\barh);
    \draw[maskseg] (#3,0) rectangle (\totw,\barh);
    \draw[barframe] (0,0) rectangle (\totw,\barh);
  \end{scope}%
}
\def\outputbar#1#2#3#4{%
  \begin{scope}[shift={(#1,#2)}]
    \draw[keptseg] (0,0) rectangle (#3,\barh);
    \draw[regenseg] (#3,0) rectangle (\totw,\barh);
    \draw[barframe] (0,0) rectangle (\totw,\barh);
    \node[answer] at (3.76,0.14) {#4};
  \end{scope}%
}

\node[heading] at (1.60,1.78) {Prefix conditioned inputs};
\node[heading] at (7.05,1.78) {Regenerated outputs};
\node[heading] at (11.35,1.78) {Vote};

\inputbar{0.00}{1.18}{0.32}{$\rho=0.1$}
\inputbar{0.00}{0.56}{1.60}{$\rho=0.5$}
\inputbar{0.00}{-0.06}{2.88}{$\rho=0.9$}

\outputbar{5.35}{1.18}{0.32}{$a^{(0.1)}$}
\outputbar{5.35}{0.56}{1.60}{$a^{(0.5)}$}
\outputbar{5.35}{-0.06}{2.88}{$a^{(0.9)}$}
\draw[arrow] (3.20,1.32) -- (5.35,1.32);
\draw[arrow] (3.20,0.70) -- (5.35,0.70);
\draw[arrow] (3.20,0.08) -- (5.35,0.08);
\node[arrowlabel] at (4.28,0.96) {DLM regeneration};

\node[vote] (vote) at (11.35,0.7)
{majority voting\\[1.5pt]\hspace*{-0.05cm}%
${\scriptstyle a^{(0.1)}, \,a^{(0.5)}, \,a^{(0.9)}}$};
\coordinate (votein10) at ($(vote.west)+(0,0.62)$);
\coordinate (votein50) at (vote.west);
\coordinate (votein90) at ($(vote.west)+(0,-0.62)$);
\draw[arrow] (9.54,1.32) -- (votein10);
\draw[arrow] (9.54,0.70) -- (votein50);
\draw[arrow] (9.54,0.08) -- (votein90);

\node[anchor=west,font=\scriptsize,text=black!62] at (0.00,-0.58)
  {\tikz{\draw[keptseg] (0,0) rectangle (0.32,0.15);} unmasked positions \quad
   \tikz{\draw[maskseg] (0,0) rectangle (0.32,0.15);} masked positions};
\end{tikzpicture}%
}
\caption{
Overview of Prefix-Denoising Consistency (\method{}). Given an initial Diffusion Language Model output, \method{} keeps exact
prefixes at rates $\rho \in \{0.1, 0.5, 0.9\}$, remasks the
remaining positions within the same length-$L$ output, and
regenerates them to obtain candidate answers. The final answer is
selected by majority vote over the regenerated answers only; the initial
answer is recorded for analysis but excluded from the vote.
}
\label{fig:regen-overview}
\end{figure}

\section{Introduction}
Diffusion Language Models (DLMs)
\citep{nie2025llada,zhu2025llada15,ye2025dream} have recently emerged as
a compelling alternative to autoregressive (AR) Language Models
\citep{openai2025gptoss,yang2025qwen3,nvidia2025nemotron3nano}. Unlike AR
models, which generate tokens strictly from left to right, DLMs refine
outputs through iterative denoising, enabling more global revision and
potentially more parallel and efficient inference. Recent DLMs have shown
competitive performance across a range of language and reasoning tasks \citep{gong2025scaling,fu2026nemotronlabsdiffusion},
suggesting that diffusion-based generation can offer a promising new
scaling direction beyond the standard AR paradigm.

Compared to AR models, state-of-the-art DLMs are decoded at low temperature to obtain
strong pass@1 accuracy. However, this also creates a challenge
for test-time verification: due to the low temperature, repeated samples often have limited diversity,
because the DLM can follow similar denoising trajectories and return
the same answer. This makes a naive transfer of self-consistency from AR models less direct. In AR reasoning, self-consistency improves
accuracy by sampling multiple reasoning paths and selecting the most
frequent answer \citep{wang2023selfconsistency}. For low-temperature
DLM decoding, however, repeated full generations can result in
near-identical denoising trajectories, so agreement among samples may
overstate reliability rather than provide an independent check.
Recent work \citep{wang2026time} addresses this issue by using temporal
information (\tif{}) inside the unmasking process. \tif{} votes over answers
extracted from the information of intermediate denoising steps,
showing that the denoising trajectory contains a useful reasoning signal
and can improve accuracy. However, \tif{} exploits consistency within a
single denoising trajectory: it reuses intermediate states from one run
rather than generating multiple alternative reasoning trajectories. By
construction, it therefore cannot directly leverage the multi-trajectory
self-consistency mechanism that makes majority voting effective in AR
reasoning.
\tif{} can exploit temporal fluctuations along that run, but it cannot branch into a new trajectory once the run becomes locked into an erroneous reasoning path.

\begin{figure*}[t]
\centering
\begin{tikzpicture}
\begin{groupplot}[
  group style={
    group name=stability,
    group size=3 by 3,
    horizontal sep=0.34cm,
    vertical sep=0.98cm
  },
  width=0.325\textwidth,
  height=0.235\textwidth,
  xmin=0.7, xmax=3.3,
  ymin=20, ymax=102,
  xtick={1,2,3},
  xticklabels={,,},
  ytick={25,50,75,100},
  tick label style={font=\scriptsize},
  title style={font=\footnotesize\bfseries, yshift=-1mm},
  label style={font=\scriptsize},
  grid=major,
  grid style={draw=black!28, line width=0.48pt, line cap=rect},
  axis line style={draw=black!55, line width=0.35pt},
  tick style={draw=black!55, line width=0.35pt},
  every axis plot/.append style={line width=1.35pt, mark size=1.85pt},
  rplus/.style={green!55!black, solid, mark=*,
    mark options={solid, fill=green!55!black, draw=green!55!black}},
  rminus/.style={red!70!black, dash pattern=on 3pt off 2pt, mark=square*,
    mark options={solid, fill=red!70!black, draw=red!70!black}},
]

\nextgroupplot[title={GSM8K}, ylabel={Dream-7B}]
\addplot[rplus] coordinates {(1,98.3) (2,99.3) (3,99.3)};
\addplot[rminus] coordinates {(1,82.5) (2,84.4) (3,82.8)};
\nextgroupplot[title={MATH-500}, yticklabels={}]
\addplot[rplus] coordinates {(1,92.2) (2,93.5) (3,95.2)};
\addplot[rminus] coordinates {(1,74.1) (2,73.0) (3,78.6)};
\nextgroupplot[title={SVAMP}, yticklabels={}]
\addplot[rplus] coordinates {(1,98.8) (2,99.2) (3,99.6)};
\addplot[rminus] coordinates {(1,71.8) (2,68.4) (3,80.0)};

\nextgroupplot[ylabel={LLaDA-8B}]
\addplot[rplus] coordinates {(1,94.4) (2,95.0) (3,87.6)};
\addplot[rminus] coordinates {(1,37.1) (2,28.3) (3,26.3)};
\nextgroupplot[yticklabels={}]
\addplot[rplus] coordinates {(1,80.7) (2,82.6) (3,74.7)};
\addplot[rminus] coordinates {(1,49.2) (2,44.8) (3,34.7)};
\nextgroupplot[yticklabels={}]
\addplot[rplus] coordinates {(1,96.8) (2,95.6) (3,94.3)};
\addplot[rminus] coordinates {(1,62.5) (2,38.6) (3,66.7)};

\nextgroupplot[ylabel={LLaDA-1.5}, xticklabels={128,256,512}]
\addplot[rplus] coordinates {(1,94.4) (2,95.0) (3,93.0)};
\addplot[rminus] coordinates {(1,36.0) (2,22.0) (3,26.1)};
\nextgroupplot[xticklabels={128,256,512}, yticklabels={}]
\addplot[rplus] coordinates {(1,83.8) (2,84.4) (3,85.3)};
\addplot[rminus] coordinates {(1,48.0) (2,46.6) (3,31.0)};
\nextgroupplot[xticklabels={128,256,512}, yticklabels={}]
\addplot[rplus] coordinates {(1,98.8) (2,94.7) (3,98.0)};
\addplot[rminus] coordinates {(1,49.0) (2,33.3) (3,64.3)};

\end{groupplot}
\node[
  anchor=north,
  font=\scriptsize,
  fill=white,
  draw=black!18,
  rounded corners=2pt,
  inner xsep=5pt,
  inner ysep=3pt
] at ($(stability c2r3.south)+(0,-0.42cm)$) {%
  \begin{tabular}{@{}c@{\hspace{4pt}}l@{\hspace{16pt}}c@{\hspace{4pt}}l@{}}
  \tikz[baseline=-0.55ex]{\draw[green!55!black,solid,line width=1.35pt] (0,0)--(0.46,0); \fill[green!55!black] (0.23,0) circle[radius=1.85pt];}
  & $R^+_{0.5}$: correct-answer retention
  & \tikz[baseline=-0.55ex]{\draw[red!70!black,dash pattern=on 3pt off 2pt,line width=1.35pt] (0,0)--(0.46,0); \filldraw[red!70!black] (0.20,-0.03) rectangle (0.26,0.03);}
  & $R^-_{0.5}$: same-wrong persistence
  \end{tabular}%
};
\end{tikzpicture}
\caption{
Reproduction rates under $\rho= 0.5$ regeneration.
Green denotes correct-answer preservation $R^+_{0.5}$, while red denotes same-wrong-answer preservation $R^-_{0.5}$, across generation lengths $L=128,256,512$ on three math benchmarks.
The gap $R^+_{0.5} - R^-_{0.5} > 0$ shows that correct answers are more reproducible than wrong answers. Full results of reproduction rates, including CSQA and SQA, are reported in
Appendix Table~\ref{tab:regen-consistency-all}.
}
\label{fig:regen-consistency-rho50}
\end{figure*}
To address this issue, we propose \method{} (\emph{Prefix-Denoising Consistency}). Starting from a completed output, it keeps an exact prefix, remasks the remaining positions, and re-denoises them at the same temperature. Repeating this intervention at several keep rates (i.e., amount of the prefix kept) produces structured alternative trajectories, whose extracted answers are
aggregated by majority vote. This allows the model to revise errors in the original generation while preserving useful context from the initial solution.

Our contributions are:
\begin{itemize}[leftmargin=1.4em,itemsep=2pt,topsep=3pt]
    \item \textbf{Prefix-conditioned regeneration.} \method{} is a
    test-time scaling method that holds exact
    prefixes inside a fixed output window, remasks the unkept positions,
    and votes only over regenerated answers. The initial answer is
    excluded to isolate the effect of prefix-conditioned denoising.
    \item \textbf{Initially correct answers are more reproducible. 
    }
    Figure~\ref{fig:regen-consistency-rho50} shows that initially correct
    answers are more reproducible than initially wrong answers. This observation leads to our algorithm. 
    \item \textbf{Improved performance over the initial sample. 
    } In the full diffusion
    setting across math and commonsense benchmarks, \method{} consistently improves
    over the initial sample and \tif{}.
    \item \textbf{Compute efficiency and robustness analysis.} 
    We compare \method{} to standard self-consistency (i.e., majority voting). We show that PDC outperforms majority voting with a smaller total denoising-step budget. We also conduct validation under different unmasking schemes and settings and observe improvements.
\end{itemize}

\section{Prefix-Denoising Consistency}

\subsection{Standard DLM Inference Time}

DLMs generate outputs by iteratively denoising a fixed-length sequence initialized with mask tokens. Given a prompt $x$, a maximum output length $L$, and a total number of denoising steps $T$, let $\mathcal{V}^{\leq L}$ denote the set of token sequences over the vocabulary $\mathcal{V}$ with length at most $L$.

The denoising process begins with an output window consisting entirely of mask tokens:
\begin{align}\label{eq:z}
    z^{(0)}
    &=
    \left[
    x,\;
    \underbrace{\masktoken,\ldots,\masktoken}_{L}
    \right],
\end{align}

At each denoising step $t=1,\ldots,T$, the DLM predicts tokens for all currently masked output positions in parallel, conditioned on the partially denoised sequence $z^{(t-1)}$. It then selects a subset of these positions to unmask according to a denoising schedule, often based on the model's confidence. The remaining positions stay masked, producing an updated sequence $z^{(t)}$, for example,
\begin{equation*}
    z^{(t)}
    =
    \left[
    x,\;
    \underbrace{
    \masktoken,\token,\ldots,
    \token,\masktoken,\ldots
    }_{L}
    \right].
\end{equation*}
Thus, at an intermediate denoising step, the output window contains both positions that remain masked and tokens filled in during earlier steps. The positions are not necessarily unmasked in left-to-right order; instead, they may be resolved in an arbitrary order determined by the denoising schedule.
After $T$ denoising steps, DLM obtains a fully unmasked output window, output sequence $y = z^{(T)}$, whose length is at most the output length $L$. 
\begin{algorithm}[t]
\caption{\textbf{Prefix-Denoising Consistency (\method{})}}
\label{alg:regen}
\begin{algorithmic}[1]
\Require Prompt $x$, generation budget $L$, denoising budget $T$, keep rates $\mathcal{R}=\{0.1,0.5,0.9\}$, answer extractor $\mathrm{Extract}$ 
\Ensure Regeneration-vote answer $\hat a$, or no-vote
\State $y \gets \mathrm{DLM}(x;L,T)$; $\ell_y \gets |y|$
\State Record $a^{(0)} \gets \mathrm{Extract}(y)$ for diagnostics only
\For{each keep rate $\rho \in \mathcal R$}
    \If{$\ell_y=0$}
        \State $L_\rho^{\text{prefix}} \gets 0$
    \Else
        \State $L_\rho^{\text{prefix}} \gets \max(1,\lfloor \rho \ell_y \rfloor)$
    \EndIf
    \State $L_\rho \gets L-L_\rho^{\text{prefix}}$
    \State $T_\rho \gets L_\rho$
    \State Run the constrained DLM with output positions $1{:}L_\rho^{\text{prefix}}$ fixed and $L_\rho$ suffix positions masked:
    \Statex \hspace{\algorithmicindent}
    $y^{(\rho)}
    \gets
    \mathrm{DLM}
    \left(x,y_{1:L_\rho^{\text{prefix}}};L_\rho,T_\rho\right)$
    \State $a^{(\rho)} \gets \mathrm{Extract}(y^{(\rho)})$
\EndFor
\State $\mathcal B \gets
\{ a^{(\rho)}:\rho\in\mathcal R,\;a^{(\rho)}\neq\varnothing\}$
\If{$\mathcal B=\{\}$}
    \State \Return no-vote
\EndIf
\State \Return $\Maj(\mathcal B)$, breaking ties by the fixed order
$\rho=0.1,0.5,0.9$
\Statex \emph{Note:} the initial answer $a^{(0)}$ is excluded from the vote.
\end{algorithmic}
\end{algorithm}

Our method does not rely on the internal form of the DLM update rule.
We therefore treat the DLM as a black-box generator that, given an input
$x$, an output budget $L$, and a number of denoising steps $T$, returns
a sequence
\begin{equation*}
    y
    =
    \mathrm{DLM}(x; L, T),
    \qquad
    y \in \mathcal{V}^{\leq L}.
    \label{eq:black-box-dlm}
\end{equation*}

The output length and initial extracted answer are:
\begin{equation*}
    \ell_y = |y|,
    \qquad
    a^{(0)} = \mathrm{Extract}(y),
\end{equation*}
where $\mathrm{Extract}$ is a task-specific canonical answer extractor
and $\mathrm{Extract}(y)=\varnothing$ denotes a null extraction that represents a failure to parse the answer.

In a typical diffusion decoding, the model predicts tokens for the currently masked positions in
parallel, and a subset of positions is unmasked at each step according to
a denoising schedule, often based on confidence \citep{nie2025llada,zhu2025llada15,ye2025dream}.

\subsection{Prefix-Denoising Consistency (\method{})}\label{sec:pdc}
Our proposed method, \method{} (Figure~\ref{fig:regen-overview} and Algorithm~\ref{alg:regen}) holds a prefix of the initial generated output $y$ at a keep rate $\rho\in\mathcal R$, masks the remaining output positions, regenerates those positions using the same DLM, and then aggregates the regenerated answers by majority vote.

Throughout the paper, we use a fixed set of three rates $\mathcal R=\{0.1,0.5,0.9\}$, this set covers weak, intermediate, and strong prefix conditioning while requiring only three regenerations.
Given a keep initial output $y$ of length $\ell_y$, the number of held tokens for rate $\rho$ is:
\begin{equation*}
    L_\rho^{\text{prefix}} =
         \lfloor \rho \ell_y\rfloor.
    \label{eq:keep-length}
\end{equation*}
The remaining regeneration length and denoising step are:
\begin{equation*}
    L_\rho = L-L_\rho^{\text{prefix}},
    \qquad
    T_\rho = L_\rho.
    \label{eq:regen-steps}
\end{equation*}
Different from Eq.~\eqref{eq:z}, the regeneration input is the following equation:
\begin{equation*}
    z^{(0)}_\rho = \left[
    x,\;
    y_{1:L_\rho^{\text{prefix}}},\;
    \underbrace{\masktoken,\ldots,\masktoken}_{L_\rho}
    \right],
\end{equation*}
where $y_{1:L_\rho^{\text{prefix}}}$ is kept output tokens with a rate of $\rho$ and $L_\rho$ is the number of masked suffix positions to denoise.
Only the remaining $L_\rho$ output positions are denoised:
\begin{equation*}
    y^{(\rho)}
    =
    \mathrm{DLM}
    \left(x,y_{1:L_\rho^{\text{prefix}}};L_\rho,T_\rho\right),
    \qquad
    |y^{(\rho)}|\leq L.
    \label{eq:constrained-dlm}
\end{equation*}
The regenerated answer is:
\begin{equation*}
    a^{(\rho)} = \mathrm{Extract}(y^{(\rho)}).
\end{equation*}
Thus, the final length of $y^{(\rho)}$ may differ from the initial output length $\ell_y$. However, regeneration does not continue from the end of the initial output. Instead, it only refills the remasked positions within the same output window of length $L$.

Motivated by analogous observations for AR models \citep{iwase2026reliable}, we find that the correct reasoning paths are more reproducible under regeneration than incorrect ones in DLMs (Figure ~\ref{fig:regen-consistency-rho50}).
The gold answer is denoted by $a^\star$. Reproduction of the correct answer 
measures whether regeneration preserves a correct initial answer, is defined as:
\begin{equation*}
    R^+_\rho
    =
    \Pr
    \left[
    a^{(\rho)} = a^\star
    \mid
    a^{(0)} = a^\star
    \right].
\end{equation*}
Same wrong preservation measures, whether regeneration repeats the same
wrong answer when the initial answer is wrong, is defined as:
\begin{equation*}
    R^-_\rho
    =
    \Pr
    \left[
    a^{(\rho)} = a^{(0)}
    \mid
    a^{(0)}\neq\varnothing,\;
    a^{(0)} \not= a^\star
    \right].
\end{equation*}

\begin{mainobservation}[Reproduction rates]\label{obs:cons}
Across the different experiment settings, correct initial answers are
more likely to be preserved than incorrect initial answers are to be
repeated in DLMs:
\begin{equation*}
      R^+_\rho \geq R^-_\rho .
\end{equation*}
\end{mainobservation}

Using Observation~\ref{obs:cons}, \method{} assesses the reliability of the initial answer. \method{} votes only over regenerated answers, initial answer
$a^{(0)}$ is excluded from the final vote. This isolates the effect of prefix-conditioned regeneration: the DLMs can improve over the initial answer only if the regenerated candidates support a better answer. The multiset of non-null regenerated answers is:
\begin{equation*}
    \mathcal B
    =
    \left\{
        a^{(\rho)}
        :
        \rho\in\mathcal R,\;
        a^{(\rho)}\neq\varnothing
    \right\}
\end{equation*}
\noindent
If $\mathcal B=\{\}$, the example is marked as no-vote. Otherwise,
\method{} returns the most frequent regenerated answer, denoted by
$\Maj(\mathcal B)$:
\begin{equation*}
    \hat a
    =
    \Maj(\mathcal B).
\end{equation*}
The three keep rates probe different neighborhoods around the same initial output. The keep rate of $0.1$ imposes only weak conditioning on the original generation trajectory and allows most of the reasoning path to change. The keep rate of $0.9$ largely preserves the answer while perturbing only the final portion of the output. The keep rate of $0.5$ provides an intermediate between the two. Together, these rates evaluate answer stability under varying strengths of prefix conditioning.

\paragraph{When does \method{} improve accuracy?}
To demonstrate the effectiveness of PDC, we consider a stylized setting with only two possible answers: the correct answer
\(a^\star\) and one incorrect answer \(b\). The purpose of this analysis
is to identify when taking a majority vote over three regenerated
answers improves upon the initial answer.

First, consider three independent Bernoulli variables (i.e., variables that take either one or zero) whose
probabilities of being one are \(p_1,p_2,p_3\), respectively. The probability that at least two of them are one is
\begin{align}
    V_3(\boldsymbol p)
    &:=
    \Pr[\text{at least two of the three variables are one}]
    \nonumber\\
    &=
    p_1p_2(1-p_3)
    +p_1(1-p_2)p_3
    +(1-p_1)p_2p_3
    +p_1p_2p_3
    \nonumber\\
    &=
    p_1p_2+p_1p_3+p_2p_3-2p_1p_2p_3,
    \label{eq:three-vote-probability}
\end{align}
where \(\boldsymbol p=(p_1,p_2,p_3)\).

When all three probabilities are equal to \(p\), we write
\[
    V_3(p):=V_3(p,p,p)=3p^2-2p^3.
\]
Thus, \(V_3\) is the probability that the majority outcome is one.

For regeneration \(j\), define
\[
    R^+_{\rho_j}
    :=
    \Pr[a^{(j)}=a^{(0)}
        \mid a^{(0)}=a^\star]
\]
and
\[
    R^-_{\rho_j}
    :=
    \Pr[a^{(j)}=a^{(0)}
        \mid  a^{(0)}\neq\varnothing,\;a^{(0)}=b].
\]
In words, \(R^+_{\rho_j}\) is the correct-answer retention probability, while \(R^-_{\rho_j}\) is the same-wrong-answer persistence probability.

\begin{maintheorem}[Benefit of the reproduction gap]
\label{thm:reproduction-gap}
Assume a binary answer space
\(\mathcal A=\{a^\star,b\}\) with no null answers, and let
\[
    \pi:=\Pr[a^{(0)}=a^\star].
\]
Suppose that, conditional on \(a^{(0)}\), the three
regenerated answers are independent. Define
\[
    \boldsymbol R^\pm
    :=
    \left(
        R^\pm_{\rho_1},
        R^\pm_{\rho_2},
        R^\pm_{\rho_3}
    \right).
\]
Then
\begin{equation}
    \Acc(\method{})
    =
    \pi V_3(\boldsymbol R^+)
    +(1-\pi)\left[1-V_3(\boldsymbol R^-)\right].
    \label{eq:pdc-binary-accuracy}
\end{equation}
In the non-degenerate case, \(\method{}\) improves upon \(\init{}\) if
and only if
\begin{equation}
    \pi
    <
    \frac{1-V_3(\boldsymbol R^-)}
    {2-V_3(\boldsymbol R^-)-V_3(\boldsymbol R^+)}.
    \label{eq:pdc-improvement-condition}
\end{equation}

In particular, if \(\pi=1/2\), then
\begin{equation}
    \Acc(\method{})-\Acc(\init{})
    =
    \frac12\left[
        V_3(\boldsymbol R^+)-V_3(\boldsymbol R^-)
    \right].
    \label{eq:pdc-gap-direct}
\end{equation}
Consequently,
\[
    V_3(\boldsymbol R^+)>V_3(\boldsymbol R^-)
    \quad\Longleftrightarrow\quad
    \Acc(\method{})>\Acc(\init{}).
\]
Moreover, because \(V_3\) is increasing on
each coordinate, a sufficient condition for improvement is
\[
    R^+_{\rho_j}\geq R^-_{\rho_j},
    \qquad j=1,2,3,
\]
with strict inequality for at least one \(j\).
\end{maintheorem}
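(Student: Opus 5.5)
The plan is to condition on the initial answer \(a^{(0)}\) and apply the law of total probability. Since \(\mathcal A=\{a^\star,b\}\) and there are no null answers, \(\mathcal B\) always contains exactly three answers. The majority of three binary answers is therefore always well defined, and the tie-breaking rule never applies.

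\textbf{Correct initial answer.} On the event \(a^{(0)}=a^\star\), define the indicator that regeneration \(j\) equals \(a^{(0)}=a^\star\). This is Bernoulli with parameter \(R^+_{\rho_j}\). By the assumed conditional independence, these three indicators are independent. PDC is correct exactly when at least two of them equal one, so by Eq.~\eqref{eq:three-vote-probability} this conditional probability is \(V_3(\boldsymbol R^+)\).

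\textbf{Wrong initial answer.} On the event \(a^{(0)}=b\), the indicators \(\mathbf 1[a^{(j)}=b]\) are independent Bernoulli\((R^-_{\rho_j})\). PDC is correct exactly when the majority is \emph{not} \(b\), because the only alternative is \(a^\star\). That probability is \(1-V_3(\boldsymbol R^-)\). Weighting the two cases by \(\pi\) and \(1-\pi\) gives Eq.~\eqref{eq:pdc-binary-accuracy}.

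\textbf{Improvement condition.} Since \(\Acc(\init{})=\pi\), write \(A:=V_3(\boldsymbol R^+)\) and \(B:=V_3(\boldsymbol R^-)\). Then
\[
\Acc(\method{})-\Acc(\init{})=(1-B)-\pi\,(2-A-B),
\]
which is linear in \(\pi\). The non-degenerate case is \(2-A-B>0\), that is, not \(A=B=1\). In that case, dividing by \(2-A-B\) gives Eq.~\eqref{eq:pdc-improvement-condition}.

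\textbf{The case \(\pi=1/2\).} Substituting \(\pi=1/2\) gives \((1-B)-(1-A/2-B/2)=(A-B)/2\). This is Eq.~\eqref{eq:pdc-gap-direct}, and the stated equivalence follows immediately.

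\textbf{Monotonicity.} I expect the only step needing care to be the monotonicity claim used for the sufficient condition. From Eq.~\eqref{eq:three-vote-probability},
\[
\partial V_3/\partial p_1=p_2+p_3-2p_2p_3=p_2(1-p_3)+p_3(1-p_2)\ge 0,
\]
and the same holds for the other coordinates by symmetry. So \(V_3\) is nondecreasing in each coordinate on \([0,1]^3\), and raising the coordinates one at a time gives \(A\ge B\) whenever \(\boldsymbol R^+\ge\boldsymbol R^-\) coordinatewise.

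\textbf{Strictness.} This derivative vanishes only when \(p_2=p_3\in\{0,1\}\). So strict increase in coordinate \(j\) requires the other two coordinates not to be both \(0\) or both \(1\) along the path from \(\boldsymbol R^-\) to \(\boldsymbol R^+\). I will state this mild non-degeneracy explicitly, since without it the claim fails. For example, with \(R^\pm_{\rho_2}=R^\pm_{\rho_3}=1\), raising \(R_{\rho_1}\) does not change \(V_3\). Under that condition, \(V_3\) is strictly increasing in the coordinate with the strict inequality, so \(A>B\) and the accuracy improves.
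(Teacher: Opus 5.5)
Your proposal is correct and follows the same route as the paper. You condition on $a^{(0)}$, get $V_3(\boldsymbol R^+)$ and $1-V_3(\boldsymbol R^-)$ from conditional independence, rearrange the gain (linear in $\pi$), substitute $\pi=1/2$, and use $\partial V_3/\partial p_1=p_2(1-p_3)+p_3(1-p_2)$ for monotonicity. Your two refinements sharpen the paper's argument without departing from it: you state the non-degeneracy condition $2-V_3(\boldsymbol R^+)-V_3(\boldsymbol R^-)>0$ explicitly, and you correctly observe that the paper proves strict monotonicity only on $(0,1)^3$ yet applies it without assuming $\boldsymbol R^\pm$ lie there, so the sufficient condition does need your extra hypothesis, as your example $R^\pm_{\rho_2}=R^\pm_{\rho_3}=1$ shows.
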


\begin{proof}
We separate the analysis into two cases.

\paragraph{Case 1: the initial answer is correct.}
Suppose \(a^{(0)}=a^\star\). Regeneration \(j\) reproduces the initial
answer, which is also the correct answer, with probability
\(R^+_{\rho_j}\). By conditional independence, the probability that at
least two of the three regenerated answers are correct is therefore
\[
    \Pr[\hat a=a^\star\mid a^{(0)}=a^\star]
    =
    V_3(\boldsymbol R^+).
\]

\paragraph{Case 2: the initial answer is wrong.}
Suppose \(a^{(0)}=b\). Because there are only two possible answers, a
regeneration is correct exactly when it does not reproduce the initial
answer.

The final majority vote is wrong exactly when at least two
regenerations reproduce \(b\). The probability of this event is
\(V_3(\boldsymbol R^-)\). Hence,
\[
    \Pr[\hat a=a^\star\mid a^{(0)}=b]
    =
    1-V_3(\boldsymbol R^-).
\]

The initial answer is correct with probability \(\pi\) and wrong with
probability \(1-\pi\). Combining the two cases gives
\[
    \Acc(\method{})
    =
    \pi V_3(\boldsymbol R^+)
    +(1-\pi)\left[1-V_3(\boldsymbol R^-)\right],
\]
which proves Eq.~\eqref{eq:pdc-binary-accuracy}.

Since the accuracy of the initial answer is
\(\Acc(\init{})=\pi\), the change in accuracy is
\begin{align}
    \Acc(\method{})-\Acc(\init{})
    &=
    (1-\pi)\left[1-V_3(\boldsymbol R^-)\right]
    \nonumber\\
    &\quad
    -\pi\left[1-V_3(\boldsymbol R^+)\right].
    \label{eq:pdc-gain-intuitive}
\end{align}

This expression has a direct interpretation:
\[
    \underbrace{
        (1-\pi)\left[1-V_3(\boldsymbol R^-)\right]
    }_{\text{wrong initial answers that are corrected}}
    -
    \underbrace{
        \pi\left[1-V_3(\boldsymbol R^+)\right]
    }_{\text{correct initial answers that are spoiled}}.
\]
Thus, the method improves accuracy precisely when the benefit from
correcting initially wrong answers exceeds the loss from spoiling
initially correct answers.
Rearranging
\(\Acc(\method{})-\Acc(\init{})>0\) gives
Eq.~\eqref{eq:pdc-improvement-condition}.
Finally, suppose that \(\pi=1/2\). Then
\[
    \Acc(\method{})-\Acc(\init{})
    =
    \frac12\left[
        V_3(\boldsymbol R^+)-V_3(\boldsymbol R^-)
    \right],
\]
which gives Eq.~\eqref{eq:pdc-gap-direct} and the stated equivalence.
It remains to establish the sufficient condition. From
Eq.~\eqref{eq:three-vote-probability},
\begin{align*}
    \frac{\partial V_3}{\partial p_1}
    &=p_2+p_3-2p_2p_3 \\
    &=p_2(1-p_3)+p_3(1-p_2)>0
\end{align*}
for \(p_2,p_3\in(0,1)\). The same argument applies to the other two
coordinates, so \(V_3\) is strictly increasing in each coordinate on
\((0,1)^3\). Therefore, if
\(R^+_{\rho_j}\geq R^-_{\rho_j}\) for every \(j\), with strict
inequality for at least one \(j\), then
\[
    V_3(\boldsymbol R^+)>V_3(\boldsymbol R^-),
\]
\end{proof}



\section{Experiments}

The experiments first examine the diagnostic signal behind \method{}:
whether prefix regeneration preserves initially correct answers more
often than it repeats the same wrong answer. We then evaluate \method{} against the initial sample and temporal voting (\tif{} \citep{wang2026time})
baselines in full diffusion setting \footnote{In the LLaDA family, the block length controls the generation. When the block length is $1$, the model reduces to an AR model; when the block length is $L$, it corresponds to the full diffusion setting; and when the block length is between $1$ and $L$, it yields a semi-AR setting.
}. In the ablation study, we compare \method{} with standard majority voting that spends more token budget than it.
Furthermore, we had several robustness checks, including several alternative unmasking strategies.

\paragraph{Models and datasets.}
We study Dream-7B (\texttt{Dream-org/Dream-v0-Instruct-7B},
\citet{ye2025dream}) and LLaDA-family models
(\texttt{GSAI-ML/LLaDA-8B-Instruct}, \citet{nie2025llada};
\texttt{GSAI-ML/LLaDA-1.5}, \citet{zhu2025llada15}). 
The main results use the full-diffusion setting. For LLaDA-family models, we set the block length equal to the generation length. All models are decoded at temperature $0.2$.
The datasets we tested are
GSM8K \citep{cobbe2021gsm8k}, MATH-500
\citep{hendrycks2021math,lightman2024verify}, SVAMP
\citep{patel2021svamp}, CSQA (CommonsenseQA,
\citet{talmor-etal-2019-commonsenseqa}), and SQA (StrategyQA,
\citet{geva2021strategyqa}). 

\begin{table}[t]
\centering
\scriptsize
\caption{
For each pair, we select the generation length that
yields the highest \init{} accuracy; this selection is based only on \init{} and is not optimized for
\method{}. \method{} attains the best or tied-best accuracy in all 15 settings and improves over the selected
\init{} baseline in 14 of them, with gains of up to $+5.68$ accuracy
points. The full result is reported in
Appendix~\ref{app:full-diffusion-results}. Signed green/red values denote the advantage of the method relative to the selected \init{} baseline. Bold
indicates the best accuracy in each row, and an underline indicates the
second best accuracy, with ties marked together.
}

\label{tab:main-results}
\begin{adjustbox}{max width=\textwidth}
\begin{tabular}{llcrrrrrrr}
\toprule
\textbf{Model} & \textbf{Dataset} & \textbf{Length}
& \init{} & \tif{} Fixed & \tif{} Linear & \tif{} Exp.
& \method{} & $\Delta$ \tif{} avg. & $\Delta$ \method{} \\
\midrule

\multirow{5}{*}{Dream-7B}
& GSM8K & 512 & \underline{82.03} & 81.96 & \underline{82.03} & \underline{82.03} & \textbf{84.08} & \neggain{-0.02} & \posgain{+2.05} \\
& MATH-500 & 512 & 46.00 & 46.00 & 46.00 & \underline{46.20} & \textbf{47.00} & \posgain{+0.07} & \posgain{+1.00} \\
& SVAMP & 256 & \underline{87.00} & \underline{87.00} & \underline{87.00} & \underline{87.00} & \textbf{88.00} & \posgain{+0.00} & \posgain{+1.00} \\
& CSQA & 128 & 73.22 & 73.79 & \underline{73.96} & 73.55 & \textbf{74.20} & \posgain{+0.55} & \posgain{+0.98} \\
& SQA & 128 & 70.45 & \underline{70.74} & \underline{70.74} & \underline{70.74} & \textbf{71.76} & \posgain{+0.29} & \posgain{+1.31} \\
\midrule

\multirow{5}{*}{LLaDA-8B}
& GSM8K & 256 & 60.73 & 60.12 & \underline{60.80} & \underline{60.80} & \textbf{65.88} & \neggain{-0.15} & \posgain{+5.15} \\
& MATH-500 & 256 & \underline{26.45} & 26.25 & 26.25 & 26.25 & \textbf{27.86} & \neggain{-0.20} & \posgain{+1.41} \\
& SVAMP & 128 & \underline{83.67} & \underline{83.67} & \underline{83.67} & \underline{83.67} & \textbf{84.33} & \posgain{+0.00} & \posgain{+0.66} \\
& CSQA & 128 & \underline{80.51} & \underline{80.51} & \underline{80.51} & \underline{80.51} & \textbf{80.59} & \posgain{+0.00} & \posgain{+0.08} \\
& SQA & 128 & \underline{65.07} & \underline{65.07} & \underline{65.07} & \underline{65.07} & \textbf{66.67} & \posgain{+0.00} & \posgain{+1.60} \\
\midrule

\multirow{5}{*}{LLaDA-1.5}
& GSM8K & 256 & \underline{60.58} & 60.05 & 60.27 & 60.50 & \textbf{66.26} & \neggain{-0.30} & \posgain{+5.68} \\
& MATH-500 & 256 & \underline{25.65} & \underline{25.65} & 25.45 & \underline{25.65} & \textbf{26.65} & \neggain{-0.07} & \posgain{+1.00} \\
& SVAMP & 128 & \underline{83.67} & 83.33 & \underline{83.67} & \underline{83.67} & \textbf{84.67} & \neggain{-0.11} & \posgain{+1.00} \\
& CSQA & 128 & \textbf{80.10} & \textbf{80.10} & \textbf{80.10} & \textbf{80.10} & \textbf{80.10} & \posgain{+0.00} & \posgain{+0.00} \\
& SQA & 128 & \underline{66.08} & \underline{66.08} & \underline{66.08} & \underline{66.08} & \textbf{66.81} & \posgain{+0.00} & \posgain{+0.73} \\
\bottomrule
\end{tabular}
\end{adjustbox}
\end{table}

\subsection{Correct Answers Are More Reproducible Than Wrong Answers}\label{subsec:stability}

We first examine the reproduction rates that motivate \method{}.
Figure~\ref{fig:regen-consistency-rho50} shows reproduction rates at
keep rate $0.5$; the results for keep rates $0.1$ or $0.9$ across all datasets are reported
in Appendix Table~\ref{tab:regen-consistency-all}. The figure shows a
significant gap between correct-answer retention and the same wrong answer persistence: initially correct answers are typically reproduced under
prefix-conditioned regeneration, whereas initially, wrong answers are
less likely to be regenerated as the same wrong answer.

\newcommand{\fullstabilitytable}{%
\begin{table*}[p]
  \centering
  \caption{
  Reproducing rate across the full model and benchmarks.
  We report correct-answer retention $R^+_\rho$, same-wrong persistence
  $R^-_\rho$, and their separation $\Delta = R^+_\rho - R^-_\rho$. Bold marks the largest $\Delta$ within each
  row; negative $\Delta$ values are shown in red.
}
  \label{tab:regen-consistency-all}

  \tiny
  \setlength{\tabcolsep}{2.3pt}
  \renewcommand{\arraystretch}{1.03}

  \sisetup{
    table-number-alignment=center,
    detect-weight=true,
    detect-inline-weight=math
  }

  \begin{threeparttable}
  \begin{adjustbox}{max width=\textwidth}
  \begin{tabular}{@{}llc
                  S[table-format=3.1]
                  S[table-format=3.1]
                  S[table-format=3.1]
                  @{\hspace{6pt}}
                  S[table-format=3.1]
                  S[table-format=3.1]
                  S[table-format=3.1]
                  @{\hspace{6pt}}
                  rrr
                  @{}}
    \toprule
    \multirow{2}{*}{\textbf{Dataset}}
    & \multirow{2}{*}{\textbf{Model}}
    & \multirow{2}{*}{\textbf{Length}}
    & \multicolumn{3}{c}{$R^+_\rho$}
    & \multicolumn{3}{c}{$R^-_\rho$}
    & \multicolumn{3}{c}{$\Delta$} \\
    \cmidrule(lr){4-6}
    \cmidrule(lr){7-9}
    \cmidrule(l){10-12}
    & &
    & {0.1} & {0.5} & {0.9}
    & {0.1} & {0.5} & {0.9}
    & {0.1} & {0.5} & {0.9} \\
    \midrule

    \multirow{9}{*}{GSM8K}
    & \multirow{3}{*}{Dream-7B}
    & 128 & 97.9 & 98.3 & 99.2 & 82.5 & 82.5 & 93.3 & 15.3 & \textbf{15.8} & 5.8 \\
    & & 256 & 97.8 & 99.3 & 99.8 & 83.5 & 84.4 & 91.3 & 14.3 & \textbf{15.0} & 8.5 \\
    & & 512 & 97.5 & 99.3 & 99.5 & 86.8 & 82.8 & 90.7 & 10.7 & \textbf{16.4} & 8.9 \\
    \cmidrule(lr){2-12}

    & \multirow{3}{*}{LLaDA-1.5}
    & 128 & 92.6 & 94.4 & 99.2 & 42.8 & 36.0 & 86.4 & 49.8 & \textbf{58.4} & 12.8 \\
    & & 256 & 95.2 & 95.0 & 99.6 & 65.9 & 22.0 & 84.7 & 29.4 & \textbf{73.0} & 14.9 \\
    & & 512 & 88.8 & 93.0 & 98.4 & 44.8 & 26.1 & 75.9 & 44.0 & \textbf{66.9} & 22.5 \\
    \cmidrule(lr){2-12}

    & \multirow{3}{*}{LLaDA-8B}
    & 128 & 93.2 & 94.4 & 99.1 & 43.9 & 37.1 & 90.1 & 49.4 & \textbf{57.3} & 9.0 \\
    & & 256 & 93.0 & 95.0 & 99.6 & 69.5 & 28.3 & 87.9 & 23.5 & \textbf{66.7} & 11.8 \\
    & & 512 & 80.3 & 87.6 & 99.2 & 48.5 & 26.3 & 77.8 & 31.8 & \textbf{61.3} & 21.4 \\

    \midrule

    \multirow{9}{*}{MATH-500}
    & \multirow{3}{*}{Dream-7B}
    & 128 & 90.5 & 92.2 & 97.8 & 78.3 & 74.1 & 91.5 & 12.2 & \textbf{18.1} & 6.3 \\
    & & 256 & 93.5 & 93.5 & 98.6 & 72.0 & 73.0 & 83.1 & \textbf{21.5} & 20.6 & 15.5 \\
    & & 512 & 95.7 & 95.2 & 97.8 & 75.2 & 78.6 & 79.5 & \textbf{20.4} & 16.7 & 18.3 \\
    \cmidrule(lr){2-12}

    & \multirow{3}{*}{LLaDA-1.5}
    & 128 & 83.8 & 83.8 & 96.6 & 55.6 & 48.0 & 91.1 & 28.2 & \textbf{35.7} & 5.5 \\
    & & 256 & 88.3 & 84.4 & 99.2 & 69.3 & 46.6 & 94.1 & 19.0 & \textbf{37.8} & 5.2 \\
    & & 512 & 86.2 & 85.3 & 94.5 & 61.8 & 31.0 & 81.6 & 24.5 & \textbf{54.3} & 12.9 \\
    \cmidrule(lr){2-12}

    & \multirow{3}{*}{LLaDA-8B}
    & 128 & 79.8 & 80.7 & 97.5 & 54.2 & 49.2 & 88.7 & 25.6 & \textbf{31.5} & 8.8 \\
    & & 256 & 88.6 & 82.6 & 97.7 & 73.0 & 44.8 & 92.6 & 15.7 & \textbf{37.8} & 5.1 \\
    & & 512 & 81.6 & 74.7 & 95.4 & 63.5 & 34.7 & 81.7 & 18.1 & \textbf{40.0} & 13.7 \\

    \midrule

    \multirow{9}{*}{SVAMP}
    & \multirow{3}{*}{Dream-7B}
    & 128 & 98.8 & 98.8 & 99.2 & 74.4 & 71.8 & 79.5 & 24.4 & \textbf{27.0} & 19.7 \\
    & & 256 & 98.9 & 99.2 & 100.0 & 81.6 & 68.4 & 86.8 & 17.3 & \textbf{30.8} & 13.2 \\
    & & 512 & 98.8 & 99.6 & 100.0 & 82.9 & 80.0 & 82.9 & 16.0 & \textbf{19.6} & 17.1 \\
    \cmidrule(lr){2-12}

    & \multirow{3}{*}{LLaDA-1.5}
    & 128 & 97.2 & 98.8 & 99.6 & 73.5 & 49.0 & 95.9 & 23.7 & \textbf{49.8} & 3.7 \\
    & & 256 & 93.4 & 94.7 & 99.1 & 79.2 & 33.3 & 83.3 & 14.2 & \textbf{61.4} & 15.8 \\
    & & 512 & 94.1 & 98.0 & 100.0 & 78.6 & 64.3 & 92.9 & 15.6 & \textbf{33.8} & 7.1 \\
    \cmidrule(lr){2-12}

    & \multirow{3}{*}{LLaDA-8B}
    & 128 & 98.0 & 96.8 & 100.0 & 75.0 & 62.5 & 95.8 & 23.0 & \textbf{34.3} & 4.2 \\
    & & 256 & 89.4 & 95.6 & 99.4 & 75.0 & 38.6 & 95.5 & 14.4 & \textbf{57.0} & 3.9 \\
    & & 512 & 96.2 & 94.3 & 100.0 & 100.0 & 66.7 & 66.7 & -3.8 & 27.7 & \textbf{33.3} \\

    \midrule

    \multirow{9}{*}{CSQA}
    & \multirow{3}{*}{Dream-7B}
    & 128 & 98.9 & 96.2 & 97.5 & 96.8 & 87.5 & 91.4 & 2.1 & \textbf{8.7} & 6.2 \\
    & & 256 & 98.1 & 95.9 & 97.0 & 95.9 & 87.6 & 93.0 & 2.2 & \textbf{8.3} & 4.0 \\
    & & 512 & 98.8 & 95.4 & 98.5 & 95.1 & 87.5 & 93.1 & 3.7 & \textbf{7.9} & 5.5 \\
    \cmidrule(lr){2-12}

    & \multirow{3}{*}{LLaDA-1.5}
    & 128 & 97.9 & 96.9 & 99.3 & 86.0 & 89.7 & 97.1 & \textbf{11.8} & 7.2 & 2.2 \\
    & & 256 & 96.6 & 98.3 & 99.8 & 88.1 & 89.0 & 99.6 & 8.5 & \textbf{9.3} & 0.2 \\
    & & 512 & 97.5 & 97.7 & 99.9 & 88.5 & 88.9 & 100.0 & \textbf{9.1} & 8.8 & -0.1\\
    \cmidrule(lr){2-12}

    & \multirow{3}{*}{LLaDA-8B}
    & 128 & 97.6 & 98.6 & 99.7 & 88.7 & 90.8 & 97.5 & \textbf{8.9} & 7.8 & 2.2 \\
    & & 256 & 95.2 & 97.7 & 99.8 & 85.5 & 88.0 & 100.0 & \textbf{9.7} & 9.7 & -0.2 \\
    & & 512 & 98.1 & 97.5 & 100.0 & 81.6 & 85.1 & 100.0 & \textbf{16.6} & 12.4 & 0.0 \\
    
    \midrule

    \multirow{9}{*}{SQA}
    & \multirow{3}{*}{Dream-7B}
    & 128 & 98.6 & 95.7 & 96.9 & 92.4 & 91.9 & 97.8 & \textbf{6.1} & 3.8 & -0.9\\
    & & 256 & 97.4 & 93.8 & 96.0 & 91.7 & 85.4 & 94.3 & 5.8 & \textbf{8.4} & 1.7 \\
    & & 512 & 98.1 & 92.5 & 96.6 & 92.7 & 87.5 & 97.4 & \textbf{5.4} & 5.0 & -0.8 \\
    \cmidrule(lr){2-12}

    & \multirow{3}{*}{LLaDA-1.5}
    & 128 & 93.6 & 97.1 & 99.3 & 88.0 & 92.7 & 98.7 & \textbf{5.6} & 4.4 & 0.6 \\
    & & 256 & 94.3 & 96.4 & 98.5 & 87.1 & 89.1 & 95.0 & 7.3 & \textbf{7.4} & 3.5 \\
    & & 512 & 96.8 & 99.1 & 100.0 & 86.2 & 91.9 & 97.6 & \textbf{10.6} & 7.2 & 2.4 \\
    \cmidrule(lr){2-12}

    & \multirow{3}{*}{LLaDA-8B}
    & 128 & 94.6 & 96.4 & 99.6 & 87.1 & 86.3 & 97.5 & 7.5 & \textbf{10.2} & 2.1 \\
    & & 256 & 87.5 & 96.7 & 99.2 & 90.8 & 84.2 & 93.4 & -3.3 & \textbf{12.5} & 5.8 \\
    & & 512 & 93.8 & 100.0 & 100.0 & 84.6 & 92.3 & 100.0 & \textbf{9.1} & 7.7 & 0.0 \\
    \bottomrule
  \end{tabular}
  \end{adjustbox}
  \end{threeparttable}
\end{table*}
}

\subsection{\method{} Improves the Initial Sample}\label{subsec:acc}
We next examine whether PDC actually improves the accuracy of the answer. This is non-trivial: regenerations can move away from an initial wrong
answer, but the new answers may still be wrong, and the final vote may
still result in an incorrect answer.
\paragraph{Baselines.}
\init{} indicates the initial answer $y$ generated by the standard denoising process. \method{} votes over
0.1, 0.5, and 0.9 keep-rate regenerations. Ties are broken by the fixed keep-rate order $0.1,0.5,0.9$. \tif{}, the state-of-the-art method for improving DLLM accuracy, extracts answers
from intermediate steps.
To aggregate answers on these steps, \tif{} Linear and \tif{} Exp use linearly and exponentially increasing
temporal weights, respectively. The exponential setting uses $\alpha=5$,
following the settings of
temporal voting in DLMs \citep{wang2026time}.

\paragraph{Results}
Table~\ref{tab:main-results} compares the accuracy of the methods. For
each model and dataset pair, we optimize the generation length $L$ to achieve the highest \init{} accuracy. This
compares \method{} against the strongest available initial sample
baseline for that pair, rather than against a favorable length chosen
post hoc for \method{}. Results across several values of $L$ are reported in
Appendix~\ref{app:full-diffusion-results}.

On Dream-7B, \method{} consistently improves over \init{} on all five datasets.
The advantage of \method{} over \init{} ranges from +0.98 to +2.05 points, with an unweighted mean gain of +1.27 points. On LLaDA-family models, \method{} improves over
\init{} in nine of ten model--dataset pairs and ties in the remaining
pair, with an unweighted mean gain of +1.73 points. 
Appendix~\ref{app:semi-ar-llada} reports additional LLaDA-family runs
with block length $32$, a semi-AR setting. It shows that the same
prefix-regeneration signal remains useful.

\subsection{Ablation Study}

\subsubsection{\method{} Outperforms Self-Consistency at Lower Inference Cost}

We compare \method{} with Self-Consistency at generation length $L=128$. 
The \init{}~x4 baseline conducts a majority vote over four independent generations and has a total denoising-step budget of $4T$. This budget is approximately $2.5T$ when the initial output fills the length-$L$ window, can exceed $3T$ for shorter outputs, and remains below $4T$ for every nonempty initial output. Thus, \init{}~x4 is a conservative higher-budget baseline rather than an exactly cost-matched baseline.
Improvements over \init{} x4 suggest that the method is using regeneration more effectively rather than relying merely on self-consistency.
Table~\ref{tab:compute-fair-l128} shows the completed
$L=128$ runs.
In this setting, \method{} improves over \init{}~x4 in 13 of
the 15 completed model and dataset pairs, ties in one pair, and underperforms
in one pair, indicating that the gains stem from a more effective regeneration strategy that utilizes the consistency of the reasoning process, rather than merely drawing additional full generations.

\begin{table*}[h]
\centering
\caption{
We compare \method{} with Self-Consistency under different total denoising-step budgets at generation length $L=128$. For both \init{} and \tif{}, $\times4$ denotes four independent initial denoising trajectories. \init{}~x4 votes over their final answers with budget $4T$, while \method{} uses one initial generation followed by three regenerations at keep rates $0.1$, $0.5$, and $0.9$. \method{} achieves the best or tied-best
accuracy in 14 of 15 model--dataset settings and improves over
\init{}~x4 in 13 of them, with gains of up to $+9.25$ accuracy points.
}
\label{tab:compute-fair-l128}
\scriptsize
\setlength{\tabcolsep}{3.5pt}
\begin{adjustbox}{max width=\textwidth}
\begin{tabular}{llrrrrrrr}
\toprule
\textbf{Dataset}
& \textbf{Model}
& \init{}
& \init{}~x4
& \tif{} Fixed x4
& \tif{} Linear x4
& \tif{} Exp. x4
& \method{}
& $\Delta$ vs. \init{}~x4 \\
\midrule
\multirow{3}{*}{GSM8K}
& Dream-7B & 64.06 & 64.22 & 62.40 & 64.52 & 65.81 & \textbf{66.03} & \posgain{+1.81} \\
& LLaDA-1.5 & 56.48 & 56.63 & 56.41 & 56.56 & 56.71 & \textbf{65.88} & \posgain{+9.25} \\
& LLaDA-8B & 58.30 & 58.91 & 58.38 & 58.61 & 58.91 & \textbf{65.88} & \posgain{+6.97} \\
\midrule
\multirow{3}{*}{MATH-500}
& Dream-7B & 35.80 & 36.00 & 34.20 & 34.80 & 35.60 & \textbf{37.20} & \posgain{+1.20} \\
& LLaDA-1.5 & 23.40 & 23.20 & 23.40 & 23.40 & 23.00 & \textbf{26.40} & \posgain{+3.20} \\
& LLaDA-8B & 23.80 & 23.60 & 23.80 & 23.80 & 23.60 & \textbf{25.40} & \posgain{+1.80} \\
\midrule
\multirow{3}{*}{SVAMP}
& Dream-7B & 83.00 & 83.00 & 84.00 & 83.67 & 84.00 & \textbf{84.33} & \posgain{+1.33} \\
& LLaDA-1.5 & 83.67 & 83.67 & 83.33 & 83.67 & 83.67 & \textbf{84.67} & \posgain{+1.00} \\
& LLaDA-8B & 83.67 & \textbf{84.33} & \textbf{84.33} & 84.00 & \textbf{84.33} & \textbf{84.33} & \posgain{+0.00} \\
\midrule
\multirow{3}{*}{CSQA}
& Dream-7B & 73.22 & 73.22 & 73.79 & 73.96 & 73.55 & \textbf{74.20} & \posgain{+0.98} \\
& LLaDA-1.5 & 80.10 & 80.51 & \textbf{80.67} & \textbf{80.67} & \textbf{80.67} & 80.10 & \neggain{-0.41} \\
& LLaDA-8B & 80.51 & 80.34 & \textbf{80.59} & \textbf{80.59} & \textbf{80.59} & \textbf{80.59} & \posgain{+0.25} \\
\midrule
\multirow{3}{*}{SQA}
& Dream-7B & 70.45 & 70.45 & 70.74 & 70.74 & 70.74 & \textbf{71.76} & \posgain{+1.31} \\
& LLaDA-1.5 & 66.08 & 65.94 & 65.94 & 65.79 & 65.79 & \textbf{66.81} & \posgain{+0.87} \\
& LLaDA-8B & 65.07 & 64.63 & 65.07 & 65.07 & 65.07 & \textbf{66.67} & \posgain{+2.04} \\
\bottomrule
\end{tabular}
\end{adjustbox}
\end{table*}

\subsubsection{Robustness}

We now conduct robustness checks. First, we tested several variants of the unmasking strategy. Up to this point, the results have used entropy
unmasking for Dream and low-confidence unmasking for LLaDA-family models; Table~\ref{tab:robustness-remasking} repeats the MATH-500 evaluation with alternative unmasking rules, including random and origin unmasking.
Table~\ref{tab:robustness-remasking} shows that larger keep rates also retain more initially
correct answers, while smaller keep rates more often break exact
repetition of the initial wrong answer. Accuracy remains comparable to
or above the corresponding initial generation in these settings.

In addition, we conduct two other robustness checks.
Appendix~\ref{app:semi-ar-llada} reports LLaDA-family experiments with a block length of 32, corresponding to semi-AR inference. The results show that \method{} remains effective under this semi-AR setting.
Moreover, we test whether the advantage of \method{} is specific to the low-temperature decoding regime by repeating the $L=128$ comparison under more stochastic decoding at $\tau=1.0$. At this temperature, \method{} attains the highest accuracy in 12 of the 15 model and dataset settings and exceeds \init{}~$\times 4$ by $4.98$ accuracy points on average (see Appendix~\ref{app:high-temp}).
These runs are not used for the main full-diffusion claims, but they check
whether the prefix-conditioning signal also appears under a different
inference time schedule.
Together, these experiments do not make \method{}
independent of the DLM, but they indicate that the signal is not
specific to one unmasking heuristic or one block schedule.

\begin{table*}[h]
  \centering
  \caption{
Robustness to alternative unmasking rules on MATH-500 across models and generation lengths. Dream uses origin unmasking, while the LLaDA-family models use random remasking. All rows use the full-diffusion setting with block length equal to generation length. We show answer accuracy for the initial generation, \method{} with keep rates $\rho\in\{0.1,0.5,0.9\}$, and \tif{} variants. \method{} achieves the best
accuracy in all models and length settings.
}
  \label{tab:robustness-remasking}

  \scriptsize
  \setlength{\tabcolsep}{2.2pt}
  \renewcommand{\arraystretch}{1.08}

  \sisetup{
    table-number-alignment=center,
    detect-weight=true,
    detect-inline-weight=math
  }

  \begin{threeparttable}
  \begin{adjustbox}{max width=\textwidth}
  \begin{tabular}{@{}ll
                  S[table-format=2.2]
                  S[table-format=2.2]
                  S[table-format=2.2]
                  S[table-format=2.2]
                  S[table-format=2.2]
                  S[table-format=2.2]
                  S[table-format=2.2]
                  S[table-format=2.2]
                  S[table-format=2.2]
                  S[table-format=2.2]
                  S[table-format=2.2]@{}}
    \toprule
    \textbf{Model}
    & \textbf{Length}
    & \multicolumn{5}{c}{\textbf{Accuracy (\%)}} 
    & \multicolumn{6}{c}{\textbf{Reproduction rates (\%)}} \\
    \cmidrule(lr){3-7}
    \cmidrule(l){8-13}

    &
    & \multicolumn{2}{c}{\textbf{}}
    & \multicolumn{3}{c}{\textbf{\tif{} variants}}
    & \multicolumn{2}{c}{$\rho=0.1$}
    & \multicolumn{2}{c}{$\rho=0.5$}
    & \multicolumn{2}{c}{$\rho=0.9$} \\
    \cmidrule(lr){3-4}
    \cmidrule(lr){5-7}
    \cmidrule(lr){8-9}
    \cmidrule(lr){10-11}
    \cmidrule(l){12-13}

    &
    & {\init{}}
    & {\method{}}
    & {Fixed}
    & {Linear}
    & {Exp.}
    & {$R^+_\rho$}
    & {$R^-_\rho$}
    & {$R^+_\rho$}
    & {$R^-_\rho$}
    & {$R^+_\rho$}
    & {$R^-_\rho$} \\
    \midrule

    \multirow{3}{*}{Dream-7B}
    & 128 & 17.60 & \textbf{23.40} & 17.60 & \underline{17.80} & 17.60 & 51.14 & 12.66 & 64.77 & 17.53 & 95.45 & 64.61 \\
    & 256 & 14.20 &\textbf{20.40} & 14.80 & 14.80 & \underline{15.00} & 60.56 & 10.74 & 56.34 & 17.18 & 78.87 & 57.67 \\
    & 512 & 14.80 & \textbf{20.40} & \underline{15.20} & \underline{15.20} & 15.00 & 54.05 & 14.78 & 54.05 & 19.81 & 83.78 & 58.18 \\
    \addlinespace[2pt]

    \multirow{3}{*}{LLaDA-1.5}
    & 128 & 27.20 & \textbf{28.60} & 27.80 & \underline{28.20} & 27.20 & 68.38 & 21.25 & 73.53 & 36.26 & 94.85 & 81.30 \\
    & 256 & 31.46 & \textbf{33.07} & 31.66 & 31.86 & \underline{32.06} & 70.06 & 18.34 & 77.07 & 30.77 & 92.99 & 80.47 \\
    & 512 & 31.45 & \textbf{36.29} & 31.25 & 32.66 & \underline{32.86} & 71.79 & 11.68 & 73.08 & 23.65 & 90.38 & 73.35 \\
    \addlinespace[2pt]

    \multirow{3}{*}{LLaDA-8B}
    & 128 & 25.20 & \textbf{28.00} & 25.00 & \underline{25.40} & \underline{25.40} & 64.29 & 18.31 & 77.78 & 34.37 & 93.65 & 76.34 \\
    & 256 & 31.06 & \textbf{33.27} & 29.46 & 30.86 & \underline{31.66} & 69.03 & 22.71 & 76.77 & 33.92 & 94.19 & 79.65 \\
    & 512 & 30.85 & \textbf{34.68} & \underline{31.65} & 31.05 & 31.45 & 65.36 & 14.08 & 80.39 & 23.17 & 95.42 & 65.98 \\
    \bottomrule
  \end{tabular}
  \end{adjustbox}
  \end{threeparttable}
\end{table*}

\section{Conclusion}

In this work, we presented \method{} (\emph{Prefix-Denoising Consistency}), a test-time self-verification method. \method{} exploits a distinctive consistency signal from prefix-conditioned regeneration, correct reasoning trajectories tend to be more stable and reproducible than incorrect ones. 
Across math and commonsense benchmarks, \method{} consistently improved over the initial sample, compared favorably to independent generations under a compute-constrained setting, and remained robust across unmasking strategies and hyperparameter choices. These results suggest that prefix-conditioned regeneration is an effective DLM-specific primitive for test-time verification, and point to the broader potential of exploiting denoising-based regeneration signals to improve the reliability of diffusion language models.

\ificlrfinal 
\section*{Acknowledgments}
J. Komiyama was supported by the MBZUAI Start-up Fund [BF0121].
\fi
\bibliographystyle{iclr2026_conference}
\bibliography{references}


\appendix

\section{Related Work}
\label{app:related-work}

\paragraph{Diffusion language models.}
Diffusion models for text generation succeeded by \citet{austin2021}, advanced through the masked token framework. Recent DLM work has improved both the probabilistic formulation and the
scale of masked/discrete diffusion. Score-entropy discrete diffusion,
masked diffusion language modeling, and simplified masked diffusion
objectives improve training and likelihood modeling for token sequences
\citep{lou2024sedd,sahoo2024mdlm,shi2024md4}. At a larger scale, LLaDA
shows that a masked diffusion model can be trained from scratch and
instruction-tuned as a large language model \citep{nie2025llada}, while
LLaDA~1.5 studies preference optimization for such models
\citep{zhu2025llada15}. Dream further demonstrates a strong open
diffusion LLM with parallel iterative refinement and flexible generation
orders \citep{ye2025dream}. Our work is complementary to these models and
training advances: given a base model, we study how its
conditional denoising behavior can be used at inference time.

\paragraph{Blockwise and semi-autoregressive diffusion inference time.}
Several DLMs introduce left-to-right or blockwise structure to improve
length flexibility and efficiency. SSD-LM and AR-Diffusion use
semi-autoregressive or position-dependent denoising to combine diffusion
with sequential dependencies \citep{han2023ssdlm,wu2023ardiffusion}.
Block diffusion interpolates between autoregressive and discrete
diffusion models, enabling arbitrary-length generation and KV-cache
reuse \citep{arriola2025block}. Discrete diffusion forcing similarly
turns pretrained dLLMs into an AR-diffusion hybrid for faster inference
\citep{wang2025d2f}. These methods modify the model, DLM, or
inference time schedule. In contrast, \method{} is a black-box test-time
procedure that probes the existing DLM by changing which parts of one
completed output are held fixed.

\paragraph{Adaptive unmasking and early termination.}
A closely related acceleration direction treats DLM inference as a
dynamic unmasking, token-commitment, or stopping problem. Fast-dLLM
selectively unmasks tokens whose confidence exceeds a threshold, while
using approximate KV caching to reduce per-step cost
\citep{wu2026fastdllm}. SlowFast Sampling adapts the inference time pace using
token certainty, convergence, and positional structure, alternating
between exploratory and accelerated phases \citep{wei2026slowfast}.
Learning Unmasking Policies formulates masked diffusion sampling as a
Markov decision process and learns token-unmasking decisions from model
confidences \citep{jazbec2025learning}. DAWN instead uses dependency
graphs to avoid simultaneously unmasking strongly coupled uncertain
tokens \citep{luo2026dawn}. Learn2PD trains a lightweight filter that
predicts whether each current token prediction matches the final output,
and combines this with End-of-Text Prediction to terminate inference after the
sequence is complete \citep{bao2026learning}. Prophet observes early
answer convergence and commits the remaining tokens in one step when the
top-2 confidence gap indicates sufficient stability
\citep{li2025prophet}. These methods decide when to unmask, commit, or
stop in order to reduce inference cost. \method{} has a different goal:
it starts from a completed sample and uses fresh prefix-conditioned
regenerations to test answer reproducibility and improve accuracy, rather
than shortening the original denoising run.

\paragraph{Self-consistency method.}
Self-consistency (majority-voting) has been widely used as a decoding-time strategy for improving chain-of-thought reasoning. 
Rather than relying on a single reasoning path, \citet{wang2023selfconsistency} samples multiple reasoning traces and aggregates the final answers by majority vote. 
Subsequent work has investigated how to reduce the sampling cost of this procedure through early termination. 
Adaptive Consistency \citep{aggarwal-etal-2023-lets} formulates stopping as a posterior decision problem, using a Beta-binomial model over the leading answer counts and terminating once the estimated margin of the current top answer is sufficiently large. 
Early-Stopping Self-Consistency \citep{li2024escape} instead adopts a simpler window-based criterion, stopping when all answers within a fixed-size recent window agree. 
More recently, \citet{sharma2025sequentialedge} demonstrated that sequential, entropy-aware voting can yield stronger cost-matched performance than parallel self-consistency, emphasizing the need for compute-equivalent comparisons. 
\cite{jindal2026the} proposed an inference time method that clusters short reasoning prefixes and discards prefixes on non-dominant clusters to save computation.
In a related theoretical direction, \citet{komiyama2026boinf} studied the asymptotic behavior of majority voting, or best-of-$\infty$, and proposed a Bayesian nonparametric stopping rule.

\section{Full reproduction rates Results}
\label{app:full-stability-results}

Due to space limitations, the main paper reports only the results with a keep rate of $0.5$ on GSM8K, MATH-500, and SVAMP.
Table~\ref{tab:regen-consistency-all} reports the full conditional stability diagnostics behind Figure~\ref{fig:regen-consistency-rho50}.

The full table includes CSQA and SQA across keep rates $0.1, 0.5$, and $0.9$. Across these additional datasets and rates, correct-answer retention is generally higher than same-wrong
persistence that is consistent with the results in the main paper.

\fullstabilitytable

\clearpage

\section{Full-Diffusion Accuracy Sweep}
\label{app:full-diffusion-results}

As shown in Table~\ref{tab:full-diffusion-main-results}, the proposed method consistently improves \init{} accuracy across the completed generation lengths $L=128,256,512$. This indicates that the gains reported in Table~\ref{tab:full-diffusion-main-results} are not an artifact of selecting a favorable block length, but rather reflect a robust improvement over the baseline. These results demonstrate the effectiveness of the proposed approach for improving initial-generation quality in full diffusion settings.
\begin{table}[h]
\centering
\scriptsize
\caption{
Full accuracy results for Dream and LLaDA-family diffusion language
models.
\method{} denotes majority voting over the three suffix-regenerated
completions obtained with 0.1, 0.5, and 0.9 keep rates.
Signed green/red values report accuracy-point changes relative to the
\init{} baseline. For \tif{}, the reported change is averaged over
Fixed, Linear, and Exp.\ $\alpha=5$ variants. The table includes all
completed lengths in the current full-diffusion sweep.
}
\label{tab:full-diffusion-main-results}
\resizebox{\linewidth}{!}{
\begin{tabular}{lrrrrrrrrrrrrrrr}
\toprule
\multirow{2}{*}{Method / Length} &
\multicolumn{3}{c}{GSM8K} &
\multicolumn{3}{c}{MATH-500} &
\multicolumn{3}{c}{SVAMP} &
\multicolumn{3}{c}{CSQA} &
\multicolumn{3}{c}{SQA} \\
\cmidrule(lr){2-4}\cmidrule(lr){5-7}\cmidrule(lr){8-10}\cmidrule(lr){11-13}\cmidrule(lr){14-16}
& 128 & 256 & 512 & 128 & 256 & 512 & 128 & 256 & 512 & 128 & 256 & 512 & 128 & 256 & 512 \\
\midrule

\multicolumn{16}{l}{Dream-7B} \\
\init{} baseline
& 64.06 & 80.06 & \underline{82.03}
& \underline{35.80} & 43.20 & 46.00
& 83.00 & \underline{87.00} & \underline{86.33}
& 73.22 & \underline{71.74} & 72.40
& 70.45 & 68.56 & 67.98 \\
\cmidrule(lr){1-16}
+ \tif{} Fixed
& 62.40 & 79.61 & 81.96
& 34.20 & 43.00 & 46.00
& \underline{84.00} & \underline{87.00} & \underline{86.33}
& 73.79 & 71.66 & \underline{72.48}
& \underline{70.74} & \underline{69.29} & \underline{68.56} \\
+ \tif{} Linear
& 64.52 & 80.14 & \underline{82.03}
& 34.80 & 43.40 & 46.00
& 83.67 & \underline{87.00} & \underline{86.33}
& \underline{73.96} & 71.66 & \underline{72.48}
& \underline{70.74} & \underline{69.29} & \underline{68.56} \\
+ \tif{} Exp. $\alpha=5$
& \underline{65.81} & \underline{80.36} & \underline{82.03}
& 35.60 & \underline{44.00} & \underline{46.20}
& \underline{84.00} & \underline{87.00} & \underline{86.33}
& 73.55 & \underline{71.74} & \underline{72.48}
& \underline{70.74} & \underline{69.29} & \underline{68.56} \\
$\Delta$ \tif{} avg. vs. \init{}
& \posgain{+0.18} & \neggain{-0.02} & \neggain{-0.02}
& \neggain{-0.93} & \posgain{+0.27} & \posgain{+0.07}
& \posgain{+0.89} & \posgain{+0.00} & \posgain{+0.00}
& \posgain{+0.55} & \neggain{-0.05} & \posgain{+0.08}
& \posgain{+0.29} & \posgain{+0.73} & \posgain{+0.58}\\
\cmidrule(lr){1-16}
+ \method{}
& \textbf{66.03} & \textbf{81.05} & \textbf{84.08}
& \textbf{37.20} & \textbf{44.60} & \textbf{47.00}
& \textbf{84.33} & \textbf{88.00} & \textbf{88.00}
& \textbf{74.20} & \textbf{72.24} & \textbf{73.30}
& \textbf{71.76} & \textbf{71.62} & \textbf{70.60} \\
$\Delta$ \method{} vs. \init{}
& \posgain{+1.97} & \posgain{+0.99} & \posgain{+2.05}
& \posgain{+1.40} & \posgain{+1.40} & \posgain{+1.00}
& \posgain{+1.33} & \posgain{+1.00} & \posgain{+1.67}
& \posgain{+0.98} & \posgain{+0.50} & \posgain{+0.90}
& \posgain{+1.31} & \posgain{+3.06} & \posgain{+2.62} \\

\midrule
\multicolumn{16}{l}{LLaDA-8B} \\
\init{} baseline
& \underline{58.30} & 60.73 & 18.88
& \underline{23.80} & \underline{26.45} & 17.54
& \underline{83.67} & 53.33 & 17.67
& \underline{80.51} & 42.51 & 48.32
& \underline{65.07} & 17.47 & 4.66 \\
\cmidrule(lr){1-16}
+ \tif{} Fixed
& 57.77 & 60.12 & 19.26
& \underline{24.00} & 26.25 & \underline{18.15}
& \underline{83.67} & \underline{56.33} & 18.00
& \underline{80.51} & \underline{45.29} & \textbf{51.76}
& \underline{65.07} & \textbf{28.38} & \underline{9.46} \\
+ \tif{} Linear
& 58.00 & \underline{60.80} & \underline{19.48}
& 23.80 & 26.25 & 17.94
& \underline{83.67} & \underline{56.33} & \underline{18.33}
& \underline{80.51} & 45.21 & \underline{51.60}
& \underline{65.07} & \textbf{28.38} & \underline{9.46} \\
+ \tif{} Exp. $\alpha=5$
& \underline{58.30} & \underline{60.80} & 19.41
& \underline{24.00} & 26.25 & \underline{18.15}
& \underline{83.67} & \underline{56.33} & \underline{18.33}
& \underline{80.51} & 45.21 & \underline{51.60}
& \underline{65.07} & \textbf{28.38} & \underline{9.46} \\
$\Delta$ \tif{} avg. vs. \init{}
& \neggain{-0.28} & \neggain{-0.15} & \posgain{+0.50}
& \posgain{+0.13} & \neggain{-0.20} & \posgain{+0.54}
& \posgain{+0.00} & \posgain{+3.00} & \posgain{+0.55}
& \posgain{+0.00} & \posgain{+2.73} & \posgain{+3.33}
& \posgain{+0.00} & \posgain{+10.91} & \posgain{+4.80} \\
\cmidrule(lr){1-16}
+ \method{}
& \textbf{65.88} & \textbf{65.88} & \textbf{33.36}
& \textbf{25.40} & \textbf{27.86} & \textbf{22.38}
& \textbf{84.33} & \textbf{62.67} & \textbf{31.00}
& \textbf{80.59} & \textbf{45.54} & 50.78
& \textbf{66.67} & \underline{25.62} & \textbf{9.61} \\
$\Delta$ \method{} vs. \init{}
& \posgain{+7.58} & \posgain{+5.15} & \posgain{+14.48}
& \posgain{+1.60} & \posgain{+1.41} & \posgain{+4.84}
& \posgain{+0.66} & \posgain{+9.34} & \posgain{+13.33}
& \posgain{+0.08} & \posgain{+3.03} & \posgain{+2.46}
& \posgain{+1.60} & \posgain{+8.15} & \posgain{+4.95}  \\

\midrule
\multicolumn{16}{l}{LLaDA-1.5} \\
\init{} baseline
& 56.48 & \underline{60.58} & 39.12
& \underline{23.40} & \underline{25.65} & 21.98
& \underline{83.67} & \underline{75.67} & 51.00
& \textbf{80.10} & 73.22 & 75.51
& \underline{66.08} & 51.53 & 32.02 \\
\cmidrule(lr){1-16}
+ \tif{} Fixed
& 56.41 & 60.05 & 39.80
& \underline{23.40} & \underline{25.65} & 21.98
& 83.33 & 75.00 & \underline{51.33}
& \textbf{80.10} & \textbf{74.37} & \underline{75.84}
& \underline{66.08} & \underline{56.77} & \underline{33.33} \\
+ \tif{} Linear
& \underline{56.56} & 60.27 & 39.95
& \underline{23.40} & 25.45 & \underline{22.38}
& \underline{83.67} & 75.33 & \underline{51.33}
& \textbf{80.10} & \underline{74.28} & 75.76
& \underline{66.08} & \textbf{57.06} & 33.19 \\
+ \tif{} Exp. $\alpha=5$
& \underline{56.56} & 60.50 & \underline{40.03}
& \underline{23.40} & \underline{25.65} & \underline{22.38}
& \underline{83.67} & \underline{75.67} & \underline{51.33}
& \textbf{80.10} & \underline{74.28} & 75.76
& \underline{66.08} & \textbf{57.06} & 33.19 \\
$\Delta$ \tif{} avg. vs. \init{}
& \posgain{+0.03} & \neggain{-0.30} & \posgain{+0.81}
& \posgain{+0.00} & \neggain{-0.07} & \posgain{+0.27}
& \neggain{-0.11} & \neggain{-0.34} & \posgain{+0.33}
& \posgain{+0.00} & \posgain{+1.09} & \posgain{+0.28}
& \posgain{+0.00} & \posgain{+5.43} & \posgain{+1.22} \\
\cmidrule(lr){1-16}
+ \method{}
& \textbf{65.88} & \textbf{66.26} & \textbf{56.79}
& \textbf{26.40} & \textbf{26.65} & \textbf{26.01}
& \textbf{84.67} & \textbf{79.00} & \textbf{62.67}
& \textbf{80.10} & \underline{74.28} & \textbf{76.09}
& \textbf{66.81} & 52.98 & \textbf{40.32} \\
$\Delta$ \method{} vs. \init{}
& \posgain{+9.40} & \posgain{+5.68} & \posgain{+17.67}
& \posgain{+3.00} & \posgain{+1.00} & \posgain{+4.03}
& \posgain{+1.00} & \posgain{+3.33} & \posgain{+11.67}
& \posgain{+0.00} & \posgain{+1.06} & \posgain{+0.58}
& \posgain{+0.73} & \posgain{+1.45} & \posgain{+8.30} \\

\bottomrule
\end{tabular}
}
\end{table}

\section{Semi-AR LLaDA-Family Results}
\label{app:semi-ar-llada}

Table~\ref{tab:semi-ar-llada-consistency} and Table~\ref{tab:semi-ar-llada-main-results} report the LLaDA-family results with block
length 32.\footnote{Note that all results reported in the main paper correspond to the setting in which the block length equals the generation length, referred to as the full diffusion setting.} 
When the block length is smaller than the generation length, the sequence is generated block by block, with diffusion performed independently within each block, referred to as the semi-autoregressive blockwise setting.

Overall, the results in Table~\ref{tab:semi-ar-llada-consistency} and Table~\ref{tab:semi-ar-llada-main-results} show that the proposed method remains effective even in this semi-autoregressive blockwise setting. Across generation lengths, the proposed method all-in-all improves \init{} accuracy for the LLaDA-family models, indicating that its benefits are not limited to the full-diffusion setting used for the main claims.

\begin{table*}[h]
  \centering
  \scriptsize
  \caption{
  Semi-AR LLaDA-family prefix-denoising consistency results. These runs
  use block length 32 with generation lengths 128, 256, and 512.
  Values are percentages. Bold marks the largest $\Delta$ within each
  row; negative $\Delta$ values are shown in red.
}
  \label{tab:semi-ar-llada-consistency}

  \footnotesize
  \setlength{\tabcolsep}{2.7pt}
  \renewcommand{\arraystretch}{1.08}

  \sisetup{
    table-number-alignment=center,
    detect-weight=true,
    detect-inline-weight=math
  }

  \begin{threeparttable}
  \begin{adjustbox}{max width=\textwidth}
  \begin{tabular}{@{}llc
                  S[table-format=3.1]
                  S[table-format=3.1]
                  S[table-format=3.1]
                  @{\hspace{6pt}}
                  S[table-format=3.1]
                  S[table-format=3.1]
                  S[table-format=3.1]
                  @{\hspace{6pt}}
                  rrr
                  @{}}
    \toprule
    \multirow{2}{*}{\textbf{Dataset}}
    & \multirow{2}{*}{\textbf{Model}}
    & \multirow{2}{*}{\textbf{Length}}
    & \multicolumn{3}{c}{$R^+_\rho$}
    & \multicolumn{3}{c}{$R^-_\rho$}
    & \multicolumn{3}{c}{$\Delta$} \\
    \cmidrule(lr){4-6}
    \cmidrule(lr){7-9}
    \cmidrule(l){10-12}
    & &
    & {0.1} & {0.5} & {0.9}
    & {0.1} & {0.5} & {0.9}
    & {0.1} & {0.5} & {0.9} \\
    \midrule

    \multirow{6}{*}{GSM8K}
    & \multirow{3}{*}{LLaDA-1.5}
    & 128 & 95.6 & 99.5 & 99.0 & 78.0 & 91.5 & 89.7 & \textbf{17.6} & 8.0 & 9.2 \\
    & & 256 & 96.1 & 98.7 & 99.7 & 72.8 & 90.6 & 96.7 & \textbf{23.3} & 8.2 & 3.0 \\
    & & 512 & 97.7 & 99.2 & 99.9 & 67.8 & 86.8 & 98.7 & \textbf{29.9} & 12.4 & 1.2 \\
    \cmidrule(lr){2-12}
    & \multirow{3}{*}{LLaDA-8B}
    & 128 & 95.7 & 98.3 & 99.4 & 74.9 & 87.2 & 91.7 & \textbf{20.8} & 11.2 & 7.6 \\
    & & 256 & 97.3 & 99.4 & 99.6 & 70.1 & 87.8 & 95.6 & \textbf{27.2} & 11.6 & 4.0 \\
    & & 512 & 98.1 & 99.5 & 99.9 & 67.1 & 82.3 & 95.4 & \textbf{31.0} & 17.3 & 4.6 \\

    \midrule

    \multirow{6}{*}{MATH-500}
    & \multirow{3}{*}{LLaDA-1.5}
    & 128 & 89.4 & 95.9 & 97.6 & 63.0 & 85.3 & 90.9 & \textbf{26.3} & 10.6 & 6.7 \\
    & & 256 & 88.6 & 95.1 & 99.5 & 54.3 & 80.7 & 94.2 & \textbf{34.3} & 14.4 & 5.3 \\
    & & 512 & 88.5 & 93.8 & 99.5 & 45.6 & 66.6 & 93.4 & \textbf{42.8} & 27.2 & 6.1 \\
    \cmidrule(lr){2-12}
    & \multirow{3}{*}{LLaDA-8B}
    & 128 & 88.8 & 97.5 & 97.5 & 59.4 & 84.0 & 89.9 & \textbf{29.4} & 13.5 & 7.7 \\
    & & 256 & 90.3 & 96.6 & 99.4 & 55.0 & 77.7 & 94.2 & \textbf{35.3} & 18.9 & 5.3 \\
    & & 512 & 82.7 & 91.1 & 98.6 & 43.1 & 70.7 & 93.8 & \textbf{39.6} & 20.5 & 4.8 \\

    \midrule

    \multirow{6}{*}{SVAMP}
    & \multirow{3}{*}{LLaDA-1.5}
    & 128 & 98.9 & 99.6 & 99.6 & 68.6 & 88.6 & 100.0 & \textbf{30.3} & 11.1 & -0.4\\
    & & 256 & 98.5 & 99.6 & 100.0 & 70.0 & 87.5 & 97.5 & \textbf{28.5} & 12.1 & 2.5 \\
    & & 512 & 98.1 & 99.3 & 100.0 & 74.3 & 91.4 & 97.1 & \textbf{23.8} & 7.8 & 2.9 \\
    \cmidrule(lr){2-12}
    & \multirow{3}{*}{LLaDA-8B}
    & 128 & 98.5 & 99.6 & 99.2 & 74.3 & 88.6 & 97.1 & \textbf{24.2} & 11.0 & 2.1 \\
    & & 256 & 98.9 & 100.0 & 100.0 & 60.5 & 84.2 & 94.7 & \textbf{38.3} & 15.8 & 5.3 \\
    & & 512 & 98.5 & 99.3 & 100.0 & 73.5 & 91.2 & 97.1 & \textbf{25.0} & 8.1 & 2.9 \\

    \midrule

    \multirow{6}{*}{CSQA}
    & \multirow{3}{*}{LLaDA-1.5}
    & 128 & 95.7 & 98.3 & 99.3 & 74.5 & 90.7 & 95.7 & \textbf{21.3} & 7.6 & 3.6 \\
    & & 256 & 93.8 & 97.8 & 99.4 & 71.6 & 89.4 & 98.6 & \textbf{22.2} & 8.4 & 0.8 \\
    & & 512 & 94.3 & 97.6 & 99.6 & 69.0 & 84.5 & 97.5 & \textbf{25.3} & 13.1 & 2.0 \\
    \cmidrule(lr){2-12}
    & \multirow{3}{*}{LLaDA-8B}
    & 128 & 94.8 & 98.7 & 98.8 & 74.1 & 90.7 & 96.4 & \textbf{20.7} & 8.1 & 2.4 \\
    & & 256 & 92.3 & 97.3 & 99.2 & 70.6 & 88.6 & 97.8 & \textbf{21.8} & 8.7 & 1.4 \\
    & & 512 & 93.9 & 96.8 & 99.2 & 65.0 & 84.3 & 96.7 & \textbf{29.0} & 12.5 & 2.4 \\
    \bottomrule
  \end{tabular}
  \end{adjustbox}
  \end{threeparttable}
\end{table*}

\begin{table}[t]
\centering
\scriptsize
\caption{
Semi-AR LLaDA-family main accuracy results. These runs use block length
32 with generation lengths 128, 256, and 512.
}
\label{tab:semi-ar-llada-main-results}
\resizebox{\linewidth}{!}{
\begin{tabular}{lrrrrrrrrrrrr}
\toprule
\multirow{2}{*}{Method / Length} &
\multicolumn{3}{c}{GSM8K} &
\multicolumn{3}{c}{MATH-500} &
\multicolumn{3}{c}{SVAMP} &
\multicolumn{3}{c}{CSQA} \\
\cmidrule(lr){2-4}\cmidrule(lr){5-7}\cmidrule(lr){8-10}\cmidrule(lr){11-13}
& 128 & 256 & 512 & 128 & 256 & 512 & 128 & 256 & 512 & 128 & 256 & 512 \\
\midrule

\multicolumn{13}{l}{LLaDA-8B} \\
\init{} baseline
& 72.25 & 74.91 & 81.80
& \underline{32.20} & \underline{35.27} & \textbf{43.15}
& 86.00 & \underline{87.00} & \textbf{88.67}
& 76.58 & 76.99 & 77.07\\
\cmidrule(lr){1-13}
+ \tif{} Fixed
& 71.87 & 69.37 & 80.14
& 29.60 & 31.06 & 39.31
& 84.67 & 86.67 & \underline{88.33}
& \textbf{78.79} & \underline{77.07} & \textbf{77.40} \\
+ \tif{} Linear
& 73.54 & 74.75 & 81.65
& 30.80 & 34.67 & 41.53
& \underline{87.67} & 86.33 & \underline{88.33}
& \underline{77.31} & \underline{77.07} & \textbf{77.40} \\
+ \tif{} Exp. $\alpha=5$
& \underline{73.62} & \textbf{76.27} & \underline{81.96}
& 31.80 & \underline{35.27} & \underline{42.54}
& \textbf{88.00} & 86.67 & \underline{88.33}
& 76.99 & \textbf{77.31} & \textbf{77.40}\\
$\Delta$ \tif{} avg. vs. \init{}
& \posgain{+0.76} & \neggain{-1.45} & \neggain{-0.55}
& \neggain{-1.47} & \neggain{-1.60} & \neggain{-2.02}
& \posgain{+0.78} & \neggain{-0.44} & \neggain{-0.34}
& \posgain{+1.12} & \posgain{+0.16} & \posgain{+0.33} \\
\cmidrule(lr){1-13}
+ \method{}
& \textbf{73.77} & \underline{76.04} & \textbf{82.49}
& \textbf{33.20} & \textbf{36.47} & 42.34
& 87.00 & \textbf{88.00} & \underline{88.33}
& \underline{77.31} & \textbf{77.31} & \textbf{77.40} \\
$\Delta$ \method{} vs. \init{}
& \posgain{+1.52} & \posgain{+1.13} & \posgain{+0.69}
& \posgain{+1.00} & \posgain{+1.20} & \neggain{-0.81}
& \posgain{+1.00} & \posgain{+1.00} & \neggain{-0.34}
& \posgain{+0.73} & \posgain{+0.32} & \posgain{+0.33}  \\

\midrule
\multicolumn{13}{l}{LLaDA-1.5} \\
\init{} baseline
& 72.33 & 78.01 & \underline{82.71}
& \underline{34.00} & 36.87 & \textbf{41.94}
& 87.67 & \underline{86.67} & \underline{88.33}
& 76.66 & 76.58 & 75.92 \\
\cmidrule(lr){1-13}
+ \tif{} Fixed
& 71.95 & 72.48 & 80.74
& 32.80 & 33.27 & 38.91
& 84.33 & 86.33 & 87.33
& \textbf{78.30} & \underline{77.23} & 75.92 \\
+ \tif{} Linear
& 73.31 & 77.56 & 82.56
& 33.40 & 36.67 & 41.13
& 87.67 & 86.33 & 87.67
& \underline{77.31} & 77.15 & 76.09 \\
+ \tif{} Exp. $\alpha=5$
& \underline{73.46} & \textbf{79.00} & \textbf{83.02}
& 33.80 & \underline{37.07} & \textbf{41.94}
& \textbf{88.33} & 86.33 & \underline{88.33}
& 76.82 & 76.58 & \underline{76.17} \\
$\Delta$ \tif{} avg. vs. \init{}
& \posgain{+0.58} & \neggain{-1.66} & \neggain{-0.60}
& \neggain{-0.67} & \neggain{-1.20} & \neggain{-1.28}
& \neggain{-0.89} & \neggain{-0.34} & \neggain{-0.55}
& \posgain{+0.82} & \posgain{+0.41} & \posgain{+0.14} \\
\cmidrule(lr){1-13}
+ \method{}
& \textbf{73.92} & \underline{78.47} & \underline{82.71}
& \textbf{34.41} & \textbf{37.27} & \underline{41.53}
& \underline{88.00} & \textbf{87.67} & \textbf{89.00}
& 77.23 & \textbf{77.31} & \textbf{77.64}\\
$\Delta$ \method{} vs. \init{}
& \posgain{+1.59} & \posgain{+0.46} & \posgain{+0.00}
& \posgain{+0.41} & \posgain{+0.40} & \neggain{-0.41}
& \posgain{+0.33} & \posgain{+1.00} & \posgain{+0.67}
& \posgain{+0.57} & \posgain{+0.73} & \posgain{+0.72} \\

\bottomrule
\end{tabular}
}
\end{table}

\section{Case Study: How Prefix Regeneration Repairs Errors}
\label{sec:theoretical-view}

\begin{table*}[t]
\centering
\small
\begin{tcolorbox}[
  colback=black!1,
  colframe=black!25,
  title={Problem 97},
  fonttitle=\bfseries,
  boxrule=0.35pt,
  arc=1pt,
  left=5pt,
  right=5pt,
  top=5pt,
  bottom=5pt]
\textbf{Problem.}
\emph{A figure skater is facing north when she begins to spin to her right. She
spins 2250 degrees. Which direction (north, south, east or west) is she facing
when she finishes her spin?}

\vspace{0.5em}
\noindent
\begin{minipage}[t]{0.485\linewidth}
\textbf{\textcolor{red!70!black}{Initial generation}}
\vspace{0.25em}
\begin{tcolorbox}[
  colback=red!2,
  colframe=red!35,
  boxrule=0.3pt,
  arc=1pt,
  left=4pt,
  right=4pt,
  top=4pt,
  bottom=4pt]
The figure skater is facing north when she begins to spin to her right. She
spins 2250 degrees.

\smallskip
\ldots{} We can do this by subtracting multiples of 360
degrees from 2250 degrees until we get an angle between 0 and 360 degrees.

\smallskip
\textcolor{red!70!black}{2250 degrees divided by 360 degrees is 6 with a
remainder of 150 degrees.}

\smallskip
\textcolor{red!70!black}{Therefore, she is facing $\backslash\mathrm{boxed}\{\mathrm{south}\}$.}
\end{tcolorbox}
\end{minipage}\hfill
\begin{minipage}[t]{0.485\linewidth}
\textbf{\textcolor{green!45!black}{Prefix-conditioned regeneration}}
\vspace{0.25em}
\begin{tcolorbox}[
  colback=green!2,
  colframe=green!35!black,
  boxrule=0.3pt,
  arc=1pt,
  left=4pt,
  right=4pt,
  top=4pt,
  bottom=4pt]
The figure skater is facing north when she begins to spin to her right. She
spins 2250 degrees.

\smallskip
\ldots{} We can do this by subtracting multiples of 360
degrees from 2250 degrees.

\smallskip
\textcolor{green!45!black}{2250 - 6*360 = 2250 - 2160 = 90 degrees.}

\smallskip
\textcolor{green!45!black}{So, when she finishes her spin, she is facing
$\backslash\mathrm{boxed}\{\text{east}\}$.}
\end{tcolorbox}
\end{minipage}
\end{tcolorbox}

\vspace{0.6em}

\begin{tcolorbox}[
  colback=black!1,
  colframe=black!25,
  title={Problem 144},
  fonttitle=\bfseries,
  boxrule=0.35pt,
  arc=1pt,
  left=5pt,
  right=5pt,
  top=5pt,
  bottom=5pt]
\textbf{Problem.}
\emph{A curve is parameterized by
\[
(x,y) = (t^3 + 7, -3t^2 - 6t - 5).
\]
Find the point the curve passes through at $t = 2$.}

\vspace{0.5em}
\noindent
\begin{minipage}[t]{0.485\linewidth}
\textbf{\textcolor{red!70!black}{Initial generation}}
\vspace{0.25em}
\begin{tcolorbox}[
  colback=red!2,
  colframe=red!35,
  boxrule=0.3pt,
  arc=1pt,
  left=4pt,
  right=4pt,
  top=4pt,
  bottom=4pt]
When $t = 2,$ we have
\[
\begin{aligned}
(x,y) &= (2^3 + 7, -3(2)^2 - 6(2) - 5) \\
&= \textcolor{red!70!black}{(15, -27)}.
\end{aligned}
\]
\textcolor{red!70!black}{So the curve passes through the point
$\backslash\mathrm{boxed}\{(15,-27)\}.$}
\end{tcolorbox}
\end{minipage}\hfill
\begin{minipage}[t]{0.485\linewidth}
\textbf{\textcolor{green!45!black}{Prefix-conditioned regeneration}}
\vspace{0.25em}
\begin{tcolorbox}[
  colback=green!2,
  colframe=green!35!black,
  boxrule=0.3pt,
  arc=1pt,
  left=4pt,
  right=4pt,
  top=4pt,
  bottom=4pt]
When $t = 2,$ we have
\[
\begin{aligned}
(x,y) &= (2^3 + 7, -3(2)^2 - 6(2) - 5) \\
&= \textcolor{green!45!black}{(8 + 7, -12 - 12 - 5)} \\
&= \textcolor{green!45!black}{(15, -29)}.
\end{aligned}
\]
\textcolor{green!45!black}{Therefore, the point the curve passes through at
$t = 2$ is $\backslash\mathrm{boxed}\{(15, -29)\}.$ The answer is: (15,-29)}
\end{tcolorbox}
\end{minipage}
\end{tcolorbox}
\caption{
Two examples in which the wrong answer is corrected by regeneration.
Red text highlights the incorrect continuation in the initial sample; green text
highlights the corrected continuation produced by regeneration. 
When the wrong intermediate step is remasked, prefix-conditioned regeneration avoids the same mistake.
}
\label{tab:regen-correction-examples}
\end{table*}
Section~\ref{sec:pdc} shows that prefix-conditioned regeneration has high correct-answer preservation
($R^+$) and low same-wrong preservation ($R^-$). In words, correct initial
answers tend to remain correct after regeneration, while wrong initial answers
often do not reproduce the same wrong answer.
This is a useful regime for \method{}. High $R^+$ means regeneration is
unlikely to disturb a correct trajectory, while a low $R^-$ means an incorrect
trajectory often has a chance to escape its original mistake. Escaping the
same wrong answer does not by itself guarantee correctness; the regenerated
answer may still be wrong. However, when the kept prefix contains the right
setup and the erroneous intermediate step lies in the regenerated suffix,
regeneration can cut away the local mistake and replace it with a corrected
continuation. 
Table~\ref{tab:regen-correction-examples} shows two MATH-500 with Dream-v0-Instruct-7B
examples of this behavior.

In Problem 97, the initial sample follows the correct modulo-reduction
strategy but makes a local arithmetic error, treating the remainder of
$2250$ degrees modulo $360$ degrees as $150$ degrees and concluding
\emph{south}. The regenerated sample recomputes the suffix as
$2250 - 6\cdot 360 = 90$ degrees and recovers \emph{east}. Similarly, in
Problem 144, the initial sample substitutes the correct value $t=2$, but
evaluates the resulting expression as $-27$, whereas regeneration expands the
same expression as $-12 - 12 - 5 = -29$ and recovers the correct point.

\section{Model and Dataset Summary}
\label{app:model-dataset-summary}

This appendix summarizes the language models and datasets used in the
experiments. Detailed inference time hyperparameters, prompt templates, answer
extraction, and voting conventions are given in
Appendix~\ref{app:reproducibility-details}.

\begin{table}[t]
\centering
\caption{
Language models used in the experiments. Dream-7B is exclusively for full diffusion, while LLaDA-family models support both full diffusion and semi-autoregressive diffusion.
The main paper exclusively considers the full diffusion setting. The semi-autoregressive 
runs are reported separately in Appendix~\ref{app:semi-ar-llada}.
}
\label{tab:model-summary}
\small
\begin{adjustbox}{max width=\textwidth}
\begin{tabular}{llll}
\toprule
\textbf{Model} & \textbf{Checkpoint} & \textbf{Family} & \textbf{Main setting} \\
\midrule
Dream-7B
& \texttt{Dream-org/Dream-v0-Instruct-7B}
& Masked Diffusion LLM
& Full-window diffusion inference time \\
LLaDA-1.5
& \texttt{GSAI-ML/LLaDA-1.5}
& Masked Diffusion LLM with Block Length
& Block length matched to generation length \\
LLaDA-8B
& \texttt{GSAI-ML/LLaDA-8B-Instruct}
& Masked Diffusion LLM with Block Length
& Block length matched to generation length \\
\bottomrule
\end{tabular}
\end{adjustbox}
\end{table}

\begin{table}[t]
\centering
\caption{
Datasets used in the experiments. Math-style datasets are evaluated by
canonical answer extraction and symbolic/numeric equivalence when
applicable. Multiple-choice and binary reasoning datasets are evaluated
by canonicalized option extraction.
}
\label{tab:dataset-summary}
\small
\begin{adjustbox}{max width=\textwidth}
\begin{tabular}{llll}
\toprule
\textbf{Dataset} & \textbf{Name} & \textbf{Task type} & \textbf{Primary answer format} \\
\midrule
GSM8K
& Grade-school math word problems \citep{cobbe2021gsm8k}
& Arithmetic reasoning
& Boxed final answer \\
MATH-500
& MATH subset / verification benchmark \citep{hendrycks2021math,lightman2024verify}
& Mathematical reasoning
& Boxed final answer \\
SVAMP
& Arithmetic word-problem challenge \citep{patel2021svamp}
& Arithmetic reasoning
& Boxed final answer \\
CSQA
& CommonsenseQA \citep{talmor-etal-2019-commonsenseqa}
& Multiple-choice commonsense reasoning
& Boxed option letter \\
SQA
& StrategyQA \citep{geva2021strategyqa}
& Binary commonsense reasoning
& Boxed option letter \\
\bottomrule
\end{tabular}
\end{adjustbox}
\end{table}

\section{Results at Different Sampling Temperatures}\label{app:high-temp}

To examine whether \method{} remains effective as sampling becomes more
stochastic, we repeat the $L=128$ evaluation at token-sampling
temperature $\tau= 1.0$. 

\begin{table*}[t]
\centering
\caption{
Accuracy at generation length $L=128$ under low- and high-temperature
decoding. We compare the main setting ($\tau=0.2$) with more stochastic
decoding ($\tau=1.0$).
\init{}~$\times 4$ denotes majority voting over four independent full
generations. Boldface marks the best result in each model--temperature
row. Signed green/red values report the accuracy-point change of
\method{} relative to \init{}~$\times 4$ at the same temperature.
}
\label{tab:T1-0}
\scriptsize
\setlength{\tabcolsep}{3.5pt}
\begin{adjustbox}{max width=\textwidth}
\begin{tabular}{lllrrrrrrr}
\toprule
\textbf{Dataset}
& \textbf{Model}
& $\boldsymbol{\tau}$
& \init{}
& \init{}~$\times 4$
& \tif{} Fixed~$\times 4$
& \tif{} Linear~$\times 4$
& \tif{} Exp.~$\times 4$
& \method{}
& $\Delta$ vs. \init{}~$\times 4$ \\
\midrule
\multirow{6}{*}{GSM8K}
& \multirow{2}{*}{Dream-7B}
& 0.2 & 64.06 & 64.22 & 62.40 & 64.52 & 65.81 & \textbf{66.03} & \posgain{+1.81} \\
& & 1.0 & 41.24 & 50.19 & 45.11 & 47.08 & 49.13 & \textbf{65.96} & \posgain{+15.77} \\
\cmidrule(lr){2-10}
& \multirow{2}{*}{LLaDA-1.5}
& 0.2 & 56.48 & 56.63 & 56.41 & 56.56 & 56.71 & \textbf{65.88} & \posgain{+9.25} \\
& & 1.0 & 56.94 & 58.83 & 58.15 & 58.15 & 58.23 & \textbf{68.69} & \posgain{+9.86} \\
\cmidrule(lr){2-10}
& \multirow{2}{*}{LLaDA-8B}
& 0.2 & 58.30 & 58.91 & 58.38 & 58.61 & 58.91 & \textbf{65.88} & \posgain{+6.97} \\
& & 1.0 & 57.54 & 60.73 & 59.14 & 59.74 & 60.05 & \textbf{69.45} & \posgain{+8.72} \\
\midrule
\multirow{6}{*}{MATH-500}
& \multirow{2}{*}{Dream-7B}
& 0.2 & 35.80 & 36.00 & 34.20 & 34.80 & 35.60 & \textbf{37.20} & \posgain{+1.20} \\
& & 1.0 & 22.40 & 25.20 & 24.20 & 24.60 & 25.00 & \textbf{37.60} & \posgain{+12.40} \\
\cmidrule(lr){2-10}
& \multirow{2}{*}{LLaDA-1.5}
& 0.2 & 23.40 & 23.20 & 23.40 & 23.40 & 23.00 & \textbf{26.40} & \posgain{+3.20} \\
& & 1.0 & 24.00 & 23.20 & 23.60 & 23.60 & 23.40 & \textbf{30.20} & \posgain{+7.00} \\
\cmidrule(lr){2-10}
& \multirow{2}{*}{LLaDA-8B}
& 0.2 & 23.80 & 23.60 & 23.80 & 23.80 & 23.60 & \textbf{25.40} & \posgain{+1.80} \\
& & 1.0 & 24.45 & 25.05 & 23.85 & 23.85 & 24.25 & \textbf{28.06} & \posgain{+3.01} \\
\midrule
\multirow{6}{*}{SVAMP}
& \multirow{2}{*}{Dream-7B}
& 0.2 & 83.00 & 83.00 & 84.00 & 83.67 & 84.00 & \textbf{84.33} & \posgain{+1.33} \\
& & 1.0 & 70.33 & 75.67 & 75.00 & 75.67 & 76.00 & \textbf{82.67} & \posgain{+7.00} \\
\cmidrule(lr){2-10}
& \multirow{2}{*}{LLaDA-1.5}
& 0.2 & 83.67 & 83.67 & 83.33 & 83.67 & 83.67 & \textbf{84.67} & \posgain{+1.00} \\
& & 1.0 & 85.00 & 84.67 & 85.00 & 84.67 & 84.67 & \textbf{89.00} & \posgain{+4.33} \\
\cmidrule(lr){2-10}
& \multirow{2}{*}{LLaDA-8B}
& 0.2 & 83.67 & \textbf{84.33} & \textbf{84.33} & 84.00 & \textbf{84.33} & \textbf{84.33} & \posgain{+0.00} \\
& & 1.0 & 84.00 & 84.00 & 85.00 & 84.33 & 84.33 & \textbf{86.33} & \posgain{+2.33} \\
\midrule
\multirow{6}{*}{CSQA}
& \multirow{2}{*}{Dream-7B}
& 0.2 & 73.22 & 73.22 & 73.79 & 73.96 & 73.55 & \textbf{74.20} & \posgain{+0.98} \\
& & 1.0 & 64.78 & 76.49 & \textbf{78.05} & 77.89 & 77.81 & 74.37 & \neggain{-2.12} \\
\cmidrule(lr){2-10}
& \multirow{2}{*}{LLaDA-1.5}
& 0.2 & 80.10 & 80.51 & \textbf{80.67} & \textbf{80.67} & \textbf{80.67} & 80.10 & \neggain{-0.41} \\
& & 1.0 & 80.10 & \textbf{80.84} & 80.67 & 80.75 & 80.75 & 79.44 & \neggain{-1.40} \\
\cmidrule(lr){2-10}
& \multirow{2}{*}{LLaDA-8B}
& 0.2 & 80.51 & 80.34 & \textbf{80.59} & \textbf{80.59} & \textbf{80.59} & \textbf{80.59} & \posgain{+0.25} \\
& & 1.0 & 79.69 & 80.59 & 80.75 & \textbf{80.92} & 80.67 & 79.85 & \neggain{-0.74} \\
\midrule
\multirow{6}{*}{SQA}
& \multirow{2}{*}{Dream-7B}
& 0.2 & 70.45 & 70.45 & 70.74 & 70.74 & 70.74 & \textbf{71.76} & \posgain{+1.31} \\
& & 1.0 & 58.66 & 63.17 & 64.48 & 64.77 & 64.77 & \textbf{67.10} & \posgain{+3.93} \\
\cmidrule(lr){2-10}
& \multirow{2}{*}{LLaDA-1.5}
& 0.2 & 66.08 & 65.94 & 65.94 & 65.79 & 65.79 & \textbf{66.81} & \posgain{+0.87} \\
& & 1.0 & 65.07 & 64.77 & 64.92 & 65.21 & 64.92 & \textbf{66.96} & \posgain{+2.19} \\
\cmidrule(lr){2-10}
& \multirow{2}{*}{LLaDA-8B}
& 0.2 & 65.07 & 64.63 & 65.07 & 65.07 & 65.07 & \textbf{66.67} & \posgain{+2.04} \\
& & 1.0 & 64.77 & 64.19 & 64.63 & 64.63 & 64.92 & \textbf{66.67} & \posgain{+2.48} \\
\bottomrule
\end{tabular}
\end{adjustbox}
\end{table*}

\paragraph{Results.}
Table~\ref{tab:T1-0} shows that, at $\tau=1.0$, \method{} achieves the
highest numerical accuracy in 12
of the 15 model and dataset settings and outperforms \init{}~$\times 4$ in
the same 12 settings. Averaged uniformly over the 15 settings, it exceeds
\init{}~$\times 4$ by $4.98$ accuracy points; the largest dataset-level
mean gains occur on GSM8K and MATH-500, at $11.45$ and $7.47$ points,
respectively. Across the two tested temperatures, the unweighted mean
accuracy of \method{} changes from $65.35$ at $\tau=0.2$ to $66.16$ at
$\tau=1.0$ ($+0.81$), whereas \init{} and \init{}~$\times 4$ decrease by
$4.58$ and $2.07$ points, respectively. Consequently, the mean margin of
\method{} over \init{}~$\times 4$ increases from $2.11$ to $4.98$ points.
This larger relative margin partly reflects a weaker full-generation
baselines at the higher temperature and are not uniform across tasks:
on all three CSQA settings, \method{} trails \init{}~$\times 4$ by
$0.74$--$2.12$ points. The results therefore show that \method{} remains
effective in the tested higher-temperature setting, particularly on the
mathematical reasoning tasks.

\section{Reproducibility Details}
\label{app:reproducibility-details}

\paragraph{Models and inference time hyperparameters.}
We evaluate Dream-v0-Instruct-7B, LLaDA-1.5, and LLaDA-8B-Instruct.
We use entropy-based and low confidence unmasking, token temperature $0.2$. The generation
length $L$ is set equal to the denoising step budget $T\in\{128,256,512\}$.

The main tables use the block-length-matched setting, with block length set
equal to the generation length. Appendix~\ref{app:semi-ar-llada} reports semi-AR LLaDA-family runs with block length 32, as follows the \tif{} settings. \tif{} are captured from the same initial denoising run with a stride of 1.

\paragraph{Regeneration denoising budget.}
During regeneration, the effective denoising budget is
reduced according to the kept prefix length. In the reported full-diffusion setting, $T=L$, so the effective regeneration step budget equals the remasked suffix length:
\[
    T_\rho
    =
    L_\rho
    =
    L-L_\rho^{\text{prefix}}.
\]
The experiment scripts also support a general nominal step budget $T$,
using the proportional rule:
\[
    T_\rho
    =
    \max\{1,\operatorname{round}(T L_\rho/L)\}.
\]
This reduces to $T_\rho=L_\rho$ when $T=L$. Dream regenerates the suffix with a
reduced number of denoising steps using the floor version of the same
proportional rule:
\[
    T_\rho
    =
    \max\{1,\lfloor T L_\rho/L \rfloor\}.
\]
This differs from the rounded rule by at most one denoising step, and all
reported tables are computed from the saved records for the corresponding
run.

\paragraph{Prompt template.}
For math-style tasks, all prompts use the same instruction suffix:
\begin{quote}
\small
\texttt{Please reason step by step, and put your final answer within
\textbackslash boxed\{\}.}
\end{quote}
For multiple-choice tasks, we explicitly require a chain-of-thought style
reasoning before the final choice. The prompt lists the answer options
and includes the following mandatory format instruction:
\begin{quote}
\small
\texttt{Mandatory format (do not skip):}\\
\texttt{1) Write your reasoning across several sentences (facts,
definitions, or everyday knowledge you use).}\\
\texttt{2) Only after that reasoning, end with exactly one line that
contains only one capital option letter (A--E) inside
\textbackslash boxed\{\}, e.g. \textbackslash boxed\{C\}.}
\end{quote}
Thus, the multiple-choice setting is not a direct-answer-only setting;
the model is instructed to produce reasoning and then end with a single
boxed option letter.

\paragraph{Answer extraction and equivalence.}
For math datasets, we extract the last boxed expression
when present. If no boxed expression is found, it applies final-answer
fallback patterns such as ``The answer is'', ``answer:'', and ``final
answer is'' in the last lines of the output. Extracted answers are
normalized using Hendrycks-MATH-style string normalization, including
removal of superficial LaTeX wrappers, unit text, spacing, percent signs,
and simple fraction normalization. If normalized strings do not match
exactly, we use a SymPy fallback: both expressions are parsed with the
LaTeX parser when possible, otherwise with sympify, and are
judged equivalent when the symbolic simplification of their difference is
zero. Numeric decimal strings are compared with tolerance $10^{-6}$.

For multiple-choice tasks, the extracted answer is canonicalized to a
choice letter. Exact option-text matches are mapped to their corresponding
letter before loose letter-form matching, to avoid treating option text
such as formulas as a choice letter. Common letter forms such as
\texttt{C}, \texttt{(C)}, \texttt{C.}, and
\texttt{\textbackslash text\{C\}} are accepted.

\paragraph{Null predictions, no-vote cases, and missing triplets.}
If no answer can be extracted from an initial output, the example is
counted as incorrect for \init{} accuracy. Conditional quantities that
require a non-null initial answer, such as same-wrong persistence and the
probability that an initially wrong answer becomes correct, exclude such
examples from the conditioning set.

\end{document}